\documentclass{article}

\usepackage{graphicx}
\usepackage{multirow}
\usepackage[labelfont=bf,textfont=it,font=footnotesize]{caption}
\usepackage{parskip}
\usepackage{booktabs}
\usepackage{listings}
\usepackage{float}
\usepackage{subcaption}

\usepackage{amssymb}
\usepackage{amsthm}
\usepackage{amsmath}
\usepackage{nicefrac}        
\usepackage{siunitx}

\usepackage[algo2e,ruled,noend]{algorithm2e}
\usepackage[numbers,sort&compress]{natbib} 
\usepackage[utf8]{inputenc} 
\usepackage[T1]{fontenc}    
\usepackage{url}            
\usepackage{microtype}      
\usepackage{xcolor}         
\usepackage{enumitem}
\usepackage{tikz}
\usepackage{pgfplots}
\usepackage{pgfplotstable}
\usepackage{multirow}
\pgfplotsset{compat=1.17}

\usetikzlibrary{spy}

\usepackage{tabulary}
\usepackage{wrapfig}

\graphicspath{{Figures/}}
\DeclareGraphicsExtensions{.pdf,.png,.jpg}





\usepackage{hyperref}
\hypersetup{
    colorlinks,
    linkcolor={magenta},
    citecolor={blue},
    urlcolor={blue!80!black},
    breaklinks=true,
	plainpages=true
}
\usepackage{amsmath,amsfonts,bm, amsthm}
\usepackage{algorithm}
\usepackage[noend]{algorithmic}
\usepackage{nicefrac}        




\newcommand{\colref}[2]{\hyperref[#2]{#1~\ref*{#2}}}
\newcommand{\coloredref}[2]{\hyperref[#2]{#1~\ref*{#2}}}
\newcommand{\coloredsubref}[3]{\hyperref[#2]{#1~\ref*{#2}{#3}}}


\newcommand{\Figref}[1]{\colref{Figure}{#1}}


\newcommand{\eqnref}[1]{\colref{Equation}{#1}}


\newcommand{\Secref}[1]{\colref{Section}{#1}}


\def\eqref#1{\colref{Equation}{#1}}
\def\Eqref#1{\colref{Equation}{#1}}





\newcommand{\Algref}[1]{\colref{Algorithm}{#1}}




\newcommand{\Tabref}[1]{\colref{Table}{#1}}

\def\1{\bm{1}}



\def\rvx{{\mathbf{x}}}

\def\U{\vec{U}}



\def\rmJ{{\mathbf{J}}}

\def\rmV{{\mathbf{V}}}





\DeclareMathAlphabet{\mathsfit}{\encodingdefault}{\sfdefault}{m}{sl}
\SetMathAlphabet{\mathsfit}{bold}{\encodingdefault}{\sfdefault}{bx}{n}

\newcommand{\mathbbm}[1]{\text{\usefont{U}{bbm}{m}{n}#1}} 











\newcommand{\R}{\mathbb{R}}



\DeclareMathOperator*{\argmin}{arg\,min}

\theoremstyle{plain}

\theoremstyle{remark}

\theoremstyle{definition}

\theoremstyle{plain}

\theoremstyle{plain}

\theoremstyle{definition}
\newtheorem{definition}{Def.}[section]


\providecommand{\corollaryname}{Corollary}
\providecommand{\lemmaname}{Lemma}
\providecommand{\problemname}{Problem}
\providecommand{\remarkname}{Remark}
\providecommand{\theoremname}{Theorem}
\newtheorem{theorem}{Theorem}

\newcommand{\mvec}[1]{{\underline{#1}}}

\newcommand{\bunderline}[2][4]{\underline{#2\mkern-#1mu}\mkern#1mu}

\newcommand{\grad}{\bunderline{{\nabla}}}


\renewcommand\vec[1]{\ensuremath\mathbf{#1}}

\usepackage{arxiv}

\title{
Differentiable Spline Approximations
}

\author{
Minsu Cho$^{2\dagger}$, Aditya Balu$^{1\dagger}$, Ameya Joshi$^2$, Anjana Deva Prasad$^1$, Biswajit Khara$^1$, Soumik Sarkar$^1$, \\ Baskar Ganapathysubramanian$^1$, Adarsh Krishnamurthy$^1$, Chinmay Hegde$^2$~{\thanks{$^1$The author is with Iowa State University, $^2$The author is with New York University. $^\dagger$Equal contribution.}}
}

\begin{document}

\maketitle




\begin{abstract}
The paradigm of differentiable programming has significantly enhanced the scope of machine learning via the judicious use of gradient-based optimization. However, standard differentiable programming methods (such as autodiff) typically require that the machine learning models be differentiable, limiting their applicability. Our goal in this paper is to use a new, principled approach to extend gradient-based optimization to functions well modeled by splines, which encompass a large family of piecewise polynomial models. We derive the form of the (weak) Jacobian of such functions and show that it exhibits a block-sparse structure that can be computed implicitly and efficiently. Overall, we show that leveraging this redesigned Jacobian in the form of a differentiable ``layer'' in predictive models leads to improved performance in diverse applications such as image segmentation, 3D point cloud reconstruction, and finite element analysis.
\end{abstract}

\subsection*{Keywords}

Differentiable NURBS Layer $|$
NURBS $|$
Geometric Deep Learning $|$
Surface Modeling

\section{Introduction}\label{Sec:Introduction}

\noindent\textbf{Motivation:} Differentiable programming has been a paradigm shift in algorithm design. The main idea is to leverage gradient-based optimization to optimize the parameters of the algorithm, allowing for end-to-end trainable systems (such as deep neural networks) to exploit structure in data and achieve better performance. This approach has found use in a large variety of applications such as scientific computing~\citep{Innes2020ALGORITHMICD, Innes2019ADP, Schafer2020ADP}, image processing~\citep{,li2018differentiableprog}, physics engines~\citep{Degrave2017ADP}, computational simulations~\citep{alnaes2015fenics}, and graphics~\citep{li2018differentiable, Chen2019LearningTP}. One way to leverage differentiable programming modules is to encode additional {structural priors} as ``layers'' in a larger machine learning model. Inherent structural constraints such as monotonicity, or piecewise constancy, are particularly prevalent in applications such as physics simulations, graphics rendering, and network engineering. In such applications, it may be beneficial to build models that obey such priors {by design}.

\noindent\textbf{Challenges:} For differentiable programming to work, all layers within the model must admit simple gradient calculations; however, this poses a major limitation in many settings. For example, consider computer graphics applications for rendering 3D objects~\citep{kindlmann2003curvature,gross1995new,loop2006real}. A common primitive in such cases is a \emph{spline} (or a piecewise polynomial) function which either exactly or approximately interpolates between a discrete set of points to produce a continuous shape or surface. Similar spline (or other piecewise polynomial) approximations arise in partial differential equation (PDE) solvers~\citep{hughes2005isogeometric}, network flow problems~\citep{balakrishnan1989composite}, and other applications. 

For such problems, we would like to compute gradients ``through'' operations involving spline approximation. However, algorithms for spline approximation often involve discontinuous (or even discrete) co-domains and may introduce undefined (or even zero) gradients. Generally, embedding such functions as layers in a differentiable program, and running automatic differentiation on this program, requires special care. A popular solution is to relax these non-differentiable, discrete components into continuous approximations for which gradients exist. This has led to recent advances in differentiable sorting~\citep{Blondel2020FastDS, Cuturi2019DifferentiableRA}, dynamic programming~\citep{Mensch2018DifferentiableDP}, and optimization~\citep{djolonga2017differentiable,cvxpylayers2019,Deng_2020}. 


\noindent\textbf{Our contributions:} We propose a principled approach for differentiable programming for spline functions \emph{without} the use of continuous relaxation\footnote{While tricks such as straight-through gradient estimation~\citep{Bengio2013EstimatingOP} also avoid continuous relaxation, they are heuristic in nature and may be inaccurate for specific problem instances~\citep{yin2019understanding}.}. For the forward pass, we leverage fast algorithms for computing the optimal projection of any given input onto the space of piecewise polynomial functions. For the backward pass, we leverage a fundamental \emph{locality} property in splines that every piece (or basis function) in the output approximation only interacts with a few other elements. Using this, we derive a weak form of the Jacobian for the spline operation, and show that it exhibits a particular block-structured form. While we focus on spline approximation in this paper, our approach can be generalized to any algorithmic module with piecewise smooth outputs. Our specific contributions are as follows:

\begin{enumerate}[leftmargin=*, noitemsep, parsep=0pt]
    \item We propose the use of spline function approximations as ``layers'' in differentiable programs.
    \item We derive efficient (nearly-linear time) methods for computing forward and backward passes for various spline approximation problems, showing that the (weak) Jacobian in each case can be represented using a \emph{block sparse} matrix that can be efficiently used for backpropagation.  
    \item We show applications of our approach in three stylized applications: image segmentation, 3D point cloud reconstruction, and finite element analysis for the solution of partial differential equations.
\end{enumerate}

\noindent\textbf{Related Work} Before proceeding, we briefly review related work.


\noindent\textbf{Extensions of autodiff:} Automatic differentiation (autodiff) algorithms enable gradient computations over basic algorithmic primitives such as loops, recursion, and branch conditions~\citep{baydin2018autodiff}. However, introducing more complex non-differentiable components requires careful treatment due to undefined or badly behaved gradients. For example, in the case of sorting and ranking operators, it can be shown that the corresponding gradients are either uninformative or downright pathological, and it is imperative the operators obey a `soft' differentiable form. \citet{Cuturi2019DifferentiableRA} propose a differentiable \emph{proxy} for sorting based on optimal transport. \citet{Blondel2020FastDS} improve this by proposing a more efficient differentiable sorting/ranking operator by appealing to isotonic regression. \citet{Berthet2020LearningWD} introduce the use of stochastic perturbations to construct smooth approximations to discrete functions, and other researchers have used similar approaches to implement end-to-end trainable top-$k$ ranking systems~\citep{Xie2020DifferentiableTO,Lee2020ADR}. Several approaches for enabling autodiff in optimization have also been researched~\citep{Pogan2020Differentiation,amos2019optnet, cvxpylayers2019,Mensch2018DifferentiableDP}.


\noindent\textbf{Structured priors as neural ``layers'':} As mentioned above, one motivation for our approach arises from the need for enforcing structural priors for scientific computing applications. Encoding non-differentiable priors such as the solutions to specific partial differential equations~\citep{Sheriffdeen2019AcceleratingPI}, geometrical constraints~\citep{Joshi2020InvNetEG, Chen2019LearningTP}, and spatial consistency measures~\citep{djolonga2017differentiable} perform well but typically require massive amounts of structured training examples.  

\noindent\textbf{Spline approximation:} 
Non-Uniform Rational B-splines (NURBS) are commonly used for defining spline surfaces for geometric modeling~\citep{10.5555/265261}. NURBS surfaces offer a high level of control and versatility; they can also compactly represent the surface geometry. The versatility of NURBS surfaces enables them to represent more complex shapes than B\`ezier or B-splines. Several frameworks that leverage deep learning are beginning to use NURBS representations. \citet{minto2018deep} use NURBS surfaces fitted over the 3D geometry as an input representation for the object classification task of ModelNet10 and ModelNet40 datasets. \citet{erwinski2016neural} presented a neural-network-based contour error prediction method for NURBS paths. \citet{fey2018splinecnn} present a new convolution operator based on B-splines for irregular structured and geometric input, e.g., graphs or meshes. 
Very recently, \citep{sharma2020parsenet} perform point cloud reconstruction to predict a B-spline surface, which is later processed to obtain a complete CAD model with other primitives ``stitched'' together.

\noindent\textbf{Differentiable PDE solvers:} With the advent of deep learning, there has been a recent rise in the development of differentiable programming libraries for physics simulations~\citep{hu2019difftaichi,qiao2020scalable}. Most often, the physics phenomena are represented using partial differential equations (PDEs)~\citep{sanchez2020learning,holl2020learning}. Considerable effort has gone into designing physics-informed loss functions~\citep{raissi2017physics1,raissi2018hidden,kharazmi2021hp} whose optimization leads to desired solutions for PDEs. Due to space limitations, we defer to a detailed survey of this (vast) area by~\citet{cai2021physics}.

\section{Differentiable Spline Approximation}
\label{sec:diffpiece}

We now introduce our framework, Differentiable Spline Approximation (DSA), as an approach to estimate gradients over piecewise polynomial operations. Our main goal will be to estimate easy-to-compute forms of the (weak) Jacobian for several spline approximation problems, enabling their use within backward passes in general differentiable programs. 


\textbf{Setup.} We begin with some basic definitions and notation. Let $f \in \R^n$ be a vector where the $i{\textsuperscript{th}}$ element is denoted as $f_i$. 
Let us use $[n] = \{1,2,\ldots,n\}$ to denote the set of all coordinate indices. For a vector $f \in \R^n$ and an index set $I \subseteq [n]$, let $f_I$ be the restriction of $f$ to $I$, i.e., for $i\in I$, we have $f_I(i):=f_i$, and $f_I(i):=0$ for $i \notin I$. Now, consider any fixed partition of $[n]$ into a set of disjoint intervals $\mathcal{I}=\{I_1, \ldots, I_k\}$ where the number of intervals $|\mathcal{I}|=k$. The $\ell_2$-norm of $f$ is written as $\|f\|_2:=\sqrt{\sum_{i=1}^n f_i^2}$ while the $\ell_2$ distance between $f, g$ is written as $\|f-g\|_2$.


We first define the notion of a \emph{discretized} $k$-\emph{spline}. Note that the use of ``spline" here is non-standard and somewhat more general than what is typically encountered in the literature. (Indeed, the spline concept used in computer graphics is a special instance of this definition; we explain further below.) 

\begin{definition}[Discretized $k$-spline]
\label{def:ppf}
A vector $h \in \R^n$ is called a discretized $k$-spline with degree $d$ if: (i) there exists a partition of $[n]$ into $k$ disjoint intervals $I_1, \ldots, I_k$; (ii) within each interval $I_i$, the coefficients of $h_j$, $j \in I_i$, can be {perfectly} interpolated by some polynomial function of degree $d$. 
\end{definition}


Let us illustrate this by an example. Suppose that $d=1$ and $k=5$. Then, $h$ is a discretized $k$-spline with degree $d$ if, in a ``line plot'' of the vector $h$ (i.e., we interpolate the 2D points $(j,h_j)$ for all $j \in [n]$), we see up to $k=5$ distinct linear pieces. 
A different way to interpret this definition is that we start with a piecewise degree-$d$ polynomial function $H : \R \rightarrow \R$ with $k=5$ pieces (with suitably defined knot points, which are the location of the intervals $I$), and evaluate $H$ at any $n$ equally spaced points in its domain. This gives us a vector $h \in \R^n$, which we call a discretized $k$-spline. In contrast with traditional splines, we allow $H$ to be arbitrarily defined \emph{at} the knot points and require no specific continuity or differentiability properties. Therefore, our definition encompasses all standard spline families (including interpolating/approximating splines such as smoothing-, cubic-, and B-splines).


\subsection{Spline Approximation}

Our focus in this paper is the problem of computing the best possible spline fit to a given set of data points (where both the parameters of the spline as well as the knot vectors are allowed to be variable). 


We provide an algebraic interpretation of this problem. For a given vector space $\R^n$, consider $S^k_d$, the set of all discretized $k$-splines with degree $d$. Since (standard) splines are vector spaces for a fixed set of knots, one can easily see that for any fixed partition of $[n]$ into $k$ subsets, the family of discretized $k$-splines is a $k(d+1)$-dimensional subspace of $\R^n$. Now suppose that the knot indices are allowed to vary. The number of possible partitions is finite (of the order of $\binom{n}{k}$), and therefore the set $S^k_d$ is a \emph{finite union of subspaces}, or a nonlinear submanifold, embedded in $\R^n$. 

Therefore, the problem of discretized $k$-spline approximation can be viewed as an \emph{orthogonal projection} onto this nonlinear manifold. Consider any arbitrary vector $x \in \R^n$ (we can think of $(i,x_i)$ as a set of $n$ data points to which we are trying to fit a $k$-spline). Then, the best $k$-spline fit to $x$ (in the sense of $\ell_2$ distance) amounts to solving the optimization problem:
\begin{align}
    {F(x)} = \argmin_{h} \frac{1}{2} \|x- h\|_2^2 = \frac{1}{2}\sum_{i=1}^n (x_i - h_i)^2 ~~
    \text{s.t.~} h \in S^k_d
\end{align}

This operation resembles standard spline regression. But it is strictly more general since this requires not only optimizing piecewise spline parameters but \emph{also the knot indices}. Crucially, we note that $F$ is both a non-differentiable and a non-convex map. Nevertheless, such an orthogonal projection can be computed in polynomial (in fact, nearly-linear)  time~\citep{DBLP:conf/vldb/JagadishKMPSS98,acharya2015fast} using many different techniques, including dynamic programming. This forms the \emph{forward pass} of our DSA ``layer''. 



Our first main conceptual contribution is a formal derivation of the \emph{backward pass} of the orthogonal projection operation. Strictly speaking, the Jacobian is not well-defined due to the non-differentiable nature of the forward pass (owing to the non-differentiability built into the definition of the $k$-spline). Therefore, we will instead be deriving the so-called ``weak'' form of the Jacobian (borrowing terminology from~\citet{Blondel2020FastDS}).

We leverage two properties of the projection operation: (1) the output of the forward pass $h$ corresponds to a partition of $[n]$, that is, each element of $h_j$ corresponds to a \emph{single} interval, $I_j$, and (2) within each interval, the least-squares operation is continuous and differentiable. 
The first property ensures that every element $x_i$ contributes to only a single piece in the output $h$.
Given that the sub-functions from the piecewise partitioning function are smooth, we also observe that the size of each block corresponds to the size of the partition, $I_i$. Using this observation, we get:

\begin{theorem}
    \label{thm:blockwise}
    The Jacobian of the operation $F$ with respect to $x \in \R^n$ can be expressed as a \emph{block diagonal} matrix, $\rmJ \in \R^{n \times n}$, whose $(s,t)^{\textrm{th}}$ entry obeys:
    \begin{eqnarray}
        \rmJ_x(F(x)) (s, t) = \frac{\partial h(x)_s}{\partial x_t} = \begin{cases}
        \frac{\partial h_{I_i}(x)_s}{\partial x_t} &\textnormal{if } s, t \in I_i \\
        0 &\textnormal{otherwise}
    \end{cases}
    \end{eqnarray}
\end{theorem}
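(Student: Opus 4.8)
The plan is to exploit the fact that the projection $F$, although globally non-convex and non-differentiable, coincides \emph{locally} with an orthogonal projection onto a fixed linear subspace, and that this subspace splits as an orthogonal direct sum indexed by the intervals of the optimal partition. Concretely, for each admissible partition $\mathcal{I} = \{I_1,\dots,I_k\}$ of $[n]$ into $k$ contiguous intervals, let $V_{\mathcal{I}} \subseteq \R^n$ denote the $k(d+1)$-dimensional subspace of vectors that restrict to a degree-$d$ polynomial on every $I_i$, and let $P_{\mathcal{I}}$ be the orthogonal projection onto $V_{\mathcal{I}}$. Since $S^k_d = \bigcup_{\mathcal{I}} V_{\mathcal{I}}$ is a finite union of subspaces, solving the forward problem is equivalent to first selecting a partition $\mathcal{I}^\star(x) \in \argmin_{\mathcal{I}} \tfrac12\|x - P_{\mathcal{I}}x\|_2^2$ and then setting $F(x) = P_{\mathcal{I}^\star(x)}\,x$. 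I would interpret the weak Jacobian exactly as the genuine Jacobian of $F$ on the open, full-measure set of inputs where $F$ is locally affine, following \citet{Blondel2020FastDS}.

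The main obstacle is handling the non-differentiability: $F$ can jump where the optimal partition changes, so I must show that for generic $x$ the minimizing partition is locally constant. For this, note that each candidate cost $c_{\mathcal{I}}(x) := \tfrac12\|x - P_{\mathcal{I}}x\|_2^2 = \tfrac12\, x^\top(\rmI - P_{\mathcal{I}})x$ is a smooth quadratic in $x$, and there are only finitely many of them (at most $\binom{n}{k}$). For any two partitions, the tie set $\{x : c_{\mathcal{I}}(x) = c_{\mathcal{I}'}(x)\}$ is the zero set of the quadratic form $\tfrac12 x^\top(P_{\mathcal{I}'}-P_{\mathcal{I}})x$, which is not identically zero whenever $P_{\mathcal{I}} \neq P_{\mathcal{I}'}$ and hence has Lebesgue measure zero; taking the finite union over all pairs of partitions with distinct projections shows that, off a set of measure zero, the minimal cost is attained at a unique projection $P_{\mathcal{I}^\star}$. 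On the complementary open set, a strict minimizer stays strictly minimal throughout a neighborhood by continuity of the finitely many $c_{\mathcal{I}}$, so $P_{\mathcal{I}^\star(\cdot)}$ is locally constant and $F$ agrees with the linear map $x \mapsto P_{\mathcal{I}^\star}x$ near the point; hence the weak Jacobian is well defined there and equals $P_{\mathcal{I}^\star}$.

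It remains to read off the block structure of $P_{\mathcal{I}^\star}$. Because the intervals $I_1,\dots,I_k$ are disjoint and cover $[n]$, the coordinate subspaces they index are mutually orthogonal, and $V_{\mathcal{I}}$ splits as the orthogonal direct sum $\bigoplus_{i} V_i$, where $V_i$ is supported entirely on $I_i$ and consists of degree-$d$ polynomials sampled on $I_i$. Consequently the projection factors as $P_{\mathcal{I}} = \sum_i P_{V_i}$, where each $P_{V_i}$ annihilates coordinates outside $I_i$ and acts within $I_i$ as the ordinary least-squares polynomial fit $h_{I_i}$ using only the entries $x_{I_i}$. Therefore $\partial h_s/\partial x_t = (P_{\mathcal{I}^\star})_{s,t}$ vanishes whenever $s$ and $t$ lie in different intervals, and equals $\partial h_{I_i}(x)_s/\partial x_t$ when $s,t \in I_i$; after ordering $[n]$ so that each interval is contiguous this is precisely the claimed block-diagonal matrix $\rmJ$. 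The one routine point I would still verify is that each within-interval least-squares map is genuinely differentiable with the stated partial derivatives, which follows from the smoothness of orthogonal projection onto the fixed subspace $V_i$ (equivalently, from the closed-form normal-equations solution on $I_i$).
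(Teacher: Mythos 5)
Your proof is correct, and it is in fact more complete than the one the paper gives. The paper's argument (Appendix, following Proposition 4 of \citet{Blondel2020FastDS}) simply takes the partition $\{I_1,\dots,I_k\}$ returned by the forward pass as given, writes $h_s$ as the sum of its restrictions to the intervals, and observes that $h_s$ depends only on the entries $x_t$ with $t$ in the same interval as $s$; the block structure then falls out in two lines. You reach the same conclusion but by a different and more principled route: you realize $S^k_d$ as a finite union of subspaces $V_{\mathcal{I}}$, identify $F$ with $x \mapsto P_{\mathcal{I}^\star(x)}x$, and prove via the measure-zero tie-set argument on the quadratics $c_{\mathcal{I}}$ that the minimizing partition is unique and locally constant on an open full-measure set, so that the weak Jacobian is literally the projection matrix $P_{\mathcal{I}^\star}$ there; the block-diagonal form then follows from the orthogonal splitting $V_{\mathcal{I}} = \bigoplus_i V_i$ over the disjoint supports. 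What your approach buys is a justification the paper leaves implicit -- namely, why one is allowed to differentiate while holding the (discretely chosen, input-dependent) partition fixed, which is precisely the delicate point in defining a ``weak'' Jacobian for a non-differentiable argmin. What the paper's terser approach buys is generality of presentation: it applies verbatim to any forward pass that outputs a partition plus smooth per-interval fits (e.g., the heuristic connected-components segmentation layer in the appendix), without needing the exact-projection structure your tie-set argument relies on. Your closing identification of each block with the normal-equations solution $\rmV_{I_i}(\rmV_{I_i}^\top\rmV_{I_i})^{-1}\rmV_{I_i}^\top$ matches the paper's explicit derivation for general degree $d$.
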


As a concrete instantiation of this result, consider the case $d=0$. This is the case where we wish to best approximate the entries of $x$ with at most $k$ ``horizontal'' pieces, where the break-points are obtained during the forward pass\footnote{In the data summarization literature, this class of functions is sometimes called $k$-histograms~\citep{DBLP:conf/vldb/JagadishKMPSS98}}. Call this approximation $h$. Then, the Jacobian of $h$ with respect to $x$ forms the block-diagonal matrix $\mathbf{J} \in \mathbb{R}^{n \times n}$:
\begin{align}
    \mathbf{J} =
    \begin{bmatrix}
        \mathbf{J}_1 & \mathbf{0} & \ldots & \mathbf{0} \\
        \mathbf{0} & \mathbf{J}_2 & \ldots & \mathbf{0} \\
        \vdots & \vdots & \ddots & \vdots \\
        \mathbf{0} & \mathbf{0} & \ldots & \rmJ_k
    \end{bmatrix}
    \label{Eq:JMatrix}
\end{align}
where all entries of each block, $\rmJ_i \in \mathbb{R}^{|I_i|\times|I_i|}$ are \emph{constant} and equal to $1/|I_i|$, i.e., they are row/column-stochastic. Note that the sparse structure of the Jacobian allows for fast computation and that computing the Jacobian vector product $\rmJ^T \nu$ for any input $\nu$ requires $O(n)$ running time. As an additional benefit, the decoupling induced by the partition enables further speed up in computation via parallelization. See the Appendix for proofs, as well as derivations of similar Jacobians for $k$-spline approximation of any degree $d \geq 1$, and generalization to 2D domains (surface approximation). In \Secref{sec:expts} we demonstrate the utility of this approach for a 2D segmentation (i.e., piecewise constant approximation) problem, similar to the setting studied in~\cite{djolonga2017differentiable}.

\subsection{Differentiable NURBS}\label{SubSec:NURBS}
We now switch to a slightly different setting involving a special spline family known as non-uniform rational B-splines (NURBS), which are common in geometric modeling. Mathematically, a NURBS curve is a continuous function $\vec{C} : \R \rightarrow \R$ defined as follows. Construct any knot vector $u$ (i.e. a non-decreasing sequence of real coordinate values) and fix degree $d$. Recursively define a sequence of basis functions, $N_i^d : \R \rightarrow \R$ computed using the \emph{Cox-de Boor formula}: 
\begin{equation}
	N_i^d(u)=\frac{u-u_i}{u_{i+d}-u_i}N_i^{d-1}(u)+\frac{u_{i+d+1}-u}{u_{i+d+1}-u_{i+1}}N_{i+1}^{d-1}(u),~
	N_i^0(u) = \left\{
	\begin{array}{l l}
		1  \quad \mbox{if $u_i\leq u\leq u_{i+1}$} \\
		0  \quad \mbox{otherwise}
	\end{array} \right.
	\label{eq:NURBSBasis}
\end{equation}
for $d=1,2,\ldots$. In the uniform case (where the knots are equally spaced), each $N_i^d$ can be viewed as being generated by recursively convolving a box function with $N_i^{d-1}$. The non-uniform case cannot be written as a convolution, but the intuition is similar. With these basis functions in hand, the NURBS curve $\vec{C}$ is defined as the rational function:
\begin{equation}
	\vec{C}(u)=\frac{\sum_{i=0}^n{N_i^d(u) w_{i}\vec{P}_{i}}}{\sum_{i=0}^n{N_i^d(u) w_{i}}} ,
	\label{eq:NURBS}
\end{equation}
where $P_i,~i=0,1,\ldots,t$ are called \emph{control points} and $w_i$ are corresponding non-negative weights. The number of control points is related to the number of knots $k$ and curve degree $d$ as follows: $k = t + d + 1$. For simplicity, assume that all weights are equal to one. The basis functions in NURBS add up to one uniformly for each $u$ (this is called the \emph{partition of unity} property). Therefore: 
\begin{equation}
\vec{C}(u)= \sum_{i=0}^t{{N_i^d(u)\vec{P}_{i}}} ,
	\label{eq:NUBS}
\end{equation}
In summary, the NURBS curve is parametrically defined via the control points and the knot positions. This discussion is for 1D curves, but extension to higher-order surfaces is conceptually similar.

Consider implementing NURBS as a differentiable ``layer'' where the inputs are the knot positions and control points. The forward pass through this layer simply consists of evaluating \Eqref{eq:NUBS} via the recursive~\Eqref{eq:NURBSBasis}, and storing the various basis functions (and their spans) for further use.

However, the backward pass is a bit more tricky, once again due to the \emph{non-differentiable} nature of $\vec{C}$. The gradient with respect to the control point coordinates, $\vec{P}$ is immediate (since the mapping from $\vec{P}$ to $\vec{C}$ is linear). However, the gradient with respect to the \emph{knot} positions, $u_i$, is not well-defined due to the non-differentiable nature of the \emph{base cases} of the recursion (which are box functions specified in terms of $u_i$). Once again, we see that the non-differentiability of NURBS is built into its very definition, and this affects numerics.


To resolve this, we propose the following approach to compute an (approximate) Jacobian of $\vec{C}$. The main source of the issue is the derivative of the box-car function $N_i^0(u) = \mathbf{1}_{[u_i, u_{i+1})}$ with respect to the knot points, which is not well defined. However, $N_i^0(u)$ can be viewed as the difference between convolutions of the unit step function with $\delta_{u_i}$ and $\delta_{u_{i+1}}$, where $\delta$ is the Dirac delta defined over the real line. We smoothly approximate the delta function by a Gaussian function with small enough bandwidth hyperparameter $\sigma$: $\delta(u_i) \approx g(u) = \exp(-(u - u_i)/2\sigma^2)$. This function is now differentiable with respect to $u_i$, with $g'(u) = \frac{u-u_i}{\sigma^2} g(u)$. Convolutions and differences are linear, and hence the derivative is the basis function times a multiplicative factor. Finally, a similar approach as the Cox-de Boor recursion (\Eqref{eq:NURBSBasis}) can be used to reconstruct the derivatives for all basis functions of higher order. See \Algref{Alg:Backward} for pseudocode and the Appendix for details. 

 
Let us probe the structure of this Jacobian a bit further. Suppose we evaluate the curve $\vec{C}$ at $n$ arbitrary domain points. There are slightly less than $k$ control points, and therefore the Jacobian is roughly of size $n \times O(k)$. 
However, due to the recursive nature of the definition of basis functions, the \emph{span} (or support) of each basis function is small and only touches $d+1$ knots; for example, only 2 knots affect $N^0_i$, only 3 knots impact $N^1_i$, and so on. This endows a natural sparse structure on the Jacobian. Moreover, for a fixed order parameter $d+1$, the span is constant~\citep{10.5555/265261}; therefore, assuming evenly spaced evaluation points, we have the same number of nonzeros. Therefore, the Jacobian exhibits an interesting \emph{Toeplitz} structure (unlike the block diagonal matrix in the case of \eqnref{Eq:JMatrix}), thereby enabling efficient evaluation during any gradient calculations. We show below in \Secref{sec:expts} that automatic differentiation using this approach surpasses existing NURBS baselines.

\begin{algorithm}[t!]
    \caption{Backward pass for NURBS Jacobian (for one curve point , $\vec{C}(u)$)
    \label{Alg:Backward}}
    \newcommand\mycommfont[1]{\footnotesize\ttfamily\textcolor{blue}{#1}}
    \SetCommentSty{mycommfont}
    \SetNoFillComment


    \text{$\vec{P'}$, $\vec{U'}$: gradients of $\vec{C}$ w.r.t.\ $\vec{P}$, $\vec{U}$}
    
    \text{Initialize: $\vec{P'}, \vec{U'} \rightarrow 0$}\\
            \text{Retrieve $u_{span}$, $N_i^d$, $\vec{C}(u)$ calculated during forward pass}

            \tcc{$u_{span}$ is the index of knot position}
            \tcc{$N_{i}^d$ is the basis function of degree $d$}
            \tcc{$\vec{C}(u)$ is the evaluated curve point}
            \For{$h=0:d+1$}{
             $\vec{P'}_{u_{span}+h} = N_h^d$ \tcp{easy since $\vec{C}$ is a linear function of $\vec{P}$.}
             $\vec{U'}_{u_{span}+h} = N_h^d \, \vec{U}_{u_{span}+h}$ \tcp{due to Gaussian approximation; see discussion below.}
            }
\end{algorithm}

\subsection{Differentiable Finite Element PDE Solvers}
\label{sec:fem}

Next, we see how spline approximations can be used to improve finite element analysis for solving PDEs. Popular recent efforts for solving PDEs using autodiff construct ``physics-informed'' solvers~\citep{raissi2017physics1,raissi2018hidden}, while other efforts have been made to utilize variational~\citep{kharazmi2021hp} or adjoint-based derivative methods~\citep{holl2020learning}. However, these approaches come with challenges while used in conjunction with autodiff packages, and gradient pathologies pose a major barrier~\citep{wang2020understanding}. 

Using our principles developed above, we propose an alternative PDE solution approach via \emph{differentiable finite elements}. PDE solvers based on Finite Element Methods (FEM) are ubiquitous, and we provide a very brief primer here. Consider a domain $\Omega$ and a differential system of equations:
	\begin{align}
	\mathcal{N}[\U(\mvec{u})] &=F(\mvec{u}),\quad \mvec{u}\in \Omega, \label{pde:abstract-equation} 
	\end{align}
where $\mathcal{N}$ denotes the differential operator and $\U : \Omega \rightarrow \R$ is a continuous field variable; it is common to specify additional boundary constraints on $\U$. The \emph{Galerkin method} converts solving for the best possible $\U$ (which is a continuous variable) into a discrete problem by first looking at the weak form:  
$	R(\U) = \int_{\Omega} V \left[ \mathcal{N}(\U) -F \right] d\mvec{u},$
where $V$ is called a \emph{test function} (and the weak form may involve some integration by parts), and rewriting this weak form in terms of a finite set of {basis} coefficients. A typical set of basis functions $\Phi_j$ is obtained by (piecewise) concatenation of polynomials, each defined over {elements} of a given partition of $\Omega$ (also called a \emph{mesh}). Commonly used choices include Lagrange polynomials, defined by:
\begin{align}
    p_{i,d}^{r}(\mvec{u}) = \sum_{r=1}^d \U_r \prod_{\substack{0\leq m \leq d\\ m\not = r}} \frac{\mvec{u} - u_m}{u_r - u_m} \; \text{s.t.} \; x_r \in [-1,1] 
\end{align}
where $\{u_0, u_1,\ldots,u_d\}$ are a finite set of nodes (akin to control points in our above discussion, except in this case the splines interpolate the control points) and $\U_r$ is the corresponding coefficient.
We use this collection of basis functions $\Phi_j$ to represent $\U$:
\begin{align}\label{def:fem-function-approximation}
	\U(\mvec{u}) = \sum_{j = 1}^{\text{\#{nodes}}} \Phi_j(\mvec{u}) \U^d_j
\end{align}
and likewise for $V$. (The resemblance with~\Eqref{eq:NURBS} above should be clear, and indeed NURBS basis functions could be an alternative choice.) Plugging the discrete coefficient representation $\U^c := \{\U^c_j\}$ into the definition of $R$, we get a standard Finite Element form, 
\begin{align}
	R(\U^c, V^c) = B(\U^c,V^c) - L(V^c)
\end{align}
where $B(\U^c, V^c)$ is the discrete form (bilinear for linear operators) that encodes the differential operator and $L(v)$ is a linear functional involving the forcing function. For most PDE operators (including linear elliptic operators), one can form the \emph{energy functional} by using $U$ as the test function: 
\begin{align}\label{eq:loss}
	J(\U^c) = \frac{1}{2} B(\U^c,\U^c) - L(\U^c).
\end{align}
Optimization of this energy functional can now be performed using gradient-based iterations evaluated by automatic differentiation. This is powerful since formal techniques exist (e.g., Galerkin Least Squares~\cite{bochev2009least}) that reformulate weak forms of PDEs into equivalent energy functionals. The key aspect to note here is that differentiating ``through" the differential operator $\mathcal{N}$ (embedded within $B$) requires derivative computations of the piecewise polynomial basis functions $\Phi_j$s, and therefore our techniques developed above are applicable. 

\section{Experiments}
\label{sec:expts}

We have implemented the DSA framework (and its different applications provided below) by extending \texttt{autograd} functions in Pytorch. We also provide the capability to run the code using CUDA for GPU support. All the experiments were performed using a local cluster with 6 compute nodes and each node having 2 GPUs (Tesla V100s with 32GB GPU memory). All training were performed using a single GPU. Each experiment shown below is performed multiple times with different random seeds, and the average value with error bars is provided. Due to limited space, we provide three interesting applications of spline approximations here (see Appendix for additional examples).

\noindent\textbf{Image segmentation:}\label{subsec:segmentation} We begin with implementing a 2D piecewise constant splines regression approach for the image segmentation problem using a UNet~\citep{ronneberger2015u}. For differentiation, we use the formulation of splines discussed in \Secref{sec:diffpiece}.
We analyze the efficacy of our approach by adding a piecewise constant DSA layer as the final layer of our network ($M_\text{DSA}$). We compare this approach with the baseline model without the piecewise constant layer ($M_\text{baseline}$).


We train two models ($M_\text{DSA}$, $M_\text{baseline}$) on two different segmentation tasks: the Weizmann horse dataset~\citep{borenstein2004learning} and the Broad Bioimage Benchmark Collection dataset~\citep{ljosa2012annotated} (publicly available under Creative Commons License). We split both the Weizmann horse dataset and Broad Bioimage Benchmark Collection dataset into train and test with 85\% and 15\% of the dataset. We use binary cross-entropy error between the ground truth and the predicted segmentation map. We use the same architecture and hyper-parameters for both models (see Appendix for details.)


\begin{figure}[t!]
    \centering
    \setlength{\tabcolsep}{1pt}
    \renewcommand{\arraystretch}{0.2}
    \def\sw{0.11\linewidth}
    \centering
    \begin{tabular}{c c c c c c c c}
        Original & Ground & $M_\text{baseline}$ & $M_\text{DSA}$ &
        Original & Ground & $M_\text{baseline}$ & $M_\text{DSA}$ \\        
        \raisebox{4pt}{\centering \includegraphics[width=\sw]{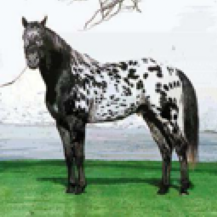}} &
        \raisebox{4pt}{\centering \includegraphics[width=\sw]{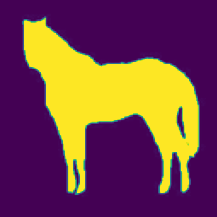}} & 
        \begin{tikzpicture}[spy using outlines={circle,yellow,magnification=3,size=0.8cm, connect spies}]
        \node {\includegraphics[width=\sw]{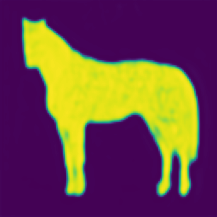}};
        \end{tikzpicture} &
        \begin{tikzpicture}[spy using outlines={circle,yellow,magnification=3,size=0.8cm, connect spies}]
        \node {\includegraphics[width=\sw]{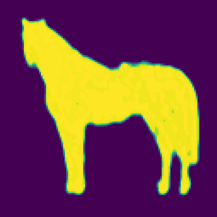}};
        \end{tikzpicture} &
        \raisebox{4pt}{\centering \includegraphics[width=\sw]{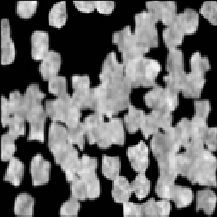}} &
        \raisebox{4pt}{\centering \includegraphics[width=\sw]{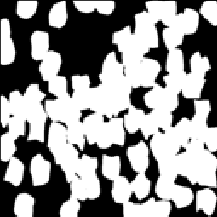}} & 
        \begin{tikzpicture}[spy using outlines={circle,yellow,magnification=3,size=0.8cm, connect spies}]
        \node {\includegraphics[width=\sw]{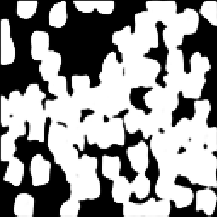}};
        \spy on (0,0.3) in node [left] at (0.7,-0.3);
        \end{tikzpicture} &
        \begin{tikzpicture}[spy using outlines={circle,yellow,magnification=3,size=0.8cm, connect spies}]
        \node {\includegraphics[width=\sw]{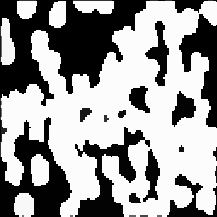}};
        \spy on (0,0.3) in node [left] at (0.7,-0.3);
        \end{tikzpicture} \\
        \raisebox{4pt}{\centering \includegraphics[width=\sw]{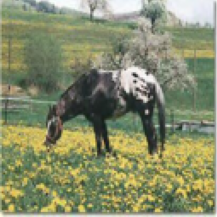}} &
        \raisebox{4pt}{\centering \includegraphics[width=\sw]{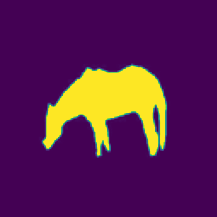}} & 
        \begin{tikzpicture}[spy using outlines={circle,yellow,magnification=3,size=0.8cm, connect spies}]
        \node {\includegraphics[width=\sw]{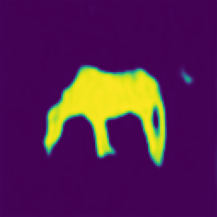}};
        \end{tikzpicture} &
        \begin{tikzpicture}[spy using outlines={circle,yellow,magnification=3,size=0.8cm, connect spies}]
        \node {\includegraphics[width=\sw]{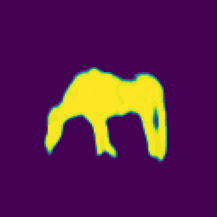}};
        \end{tikzpicture} &
        \raisebox{4pt}{\centering \includegraphics[width=\sw]{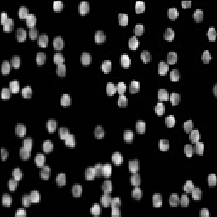}} &
        \raisebox{4pt}{\centering \includegraphics[width=\sw]{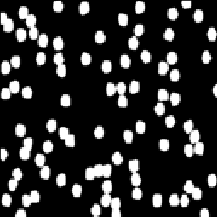}} & 
        \begin{tikzpicture}[spy using outlines={circle,yellow,magnification=3,size=0.8cm, connect spies}]
        \node {\includegraphics[width=\sw]{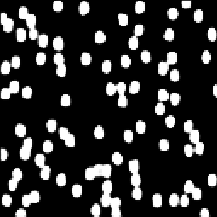}};
        \spy on (-0.1,0.3) in node [left] at (0.7,-0.3);
        \end{tikzpicture} &
        \begin{tikzpicture}[spy using outlines={circle,yellow,magnification=3,size=0.8cm, connect spies}]
        \node {\includegraphics[width=\sw]{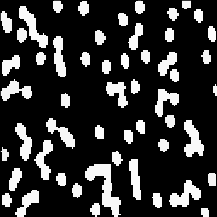}};
        \spy on (-0.1,0.3) in node [left] at (0.7,-0.3);
        \end{tikzpicture} \\
        \raisebox{4pt}{\centering \includegraphics[width=\sw]{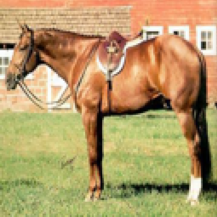}} &
        \raisebox{4pt}{\centering \includegraphics[width=\sw]{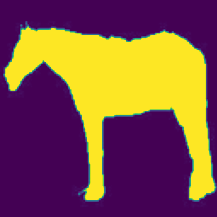}} & 
        \begin{tikzpicture}[spy using outlines={circle,yellow,magnification=3,size=0.8cm, connect spies}]
        \node {\includegraphics[width=\sw]{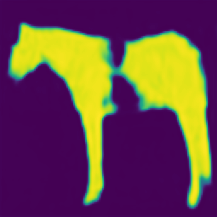}};
        \end{tikzpicture} &
        \begin{tikzpicture}[spy using outlines={circle,yellow,magnification=3,size=0.8cm, connect spies}]
        \node {\includegraphics[width=\sw]{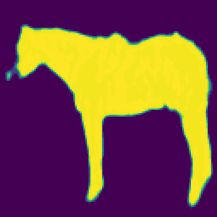}};
        \end{tikzpicture} &
        \raisebox{4pt}{\centering \includegraphics[width=\sw]{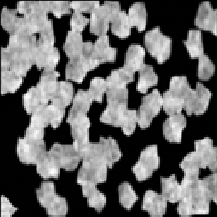}} &
        \raisebox{4pt}{\centering \includegraphics[width=\sw]{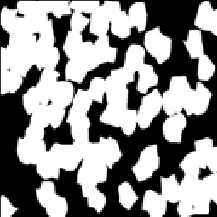}} & 
        \begin{tikzpicture}[spy using outlines={circle,yellow,magnification=3,size=0.8cm, connect spies}]
        \node {\includegraphics[width=\sw]{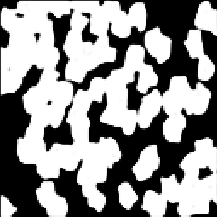}};
        \spy on (0,0.3) in node [left] at (0.7,-0.3);
        \end{tikzpicture} &
        \begin{tikzpicture}[spy using outlines={circle,yellow,magnification=3,size=0.8cm, connect spies}]
        \node {\includegraphics[width=\sw]{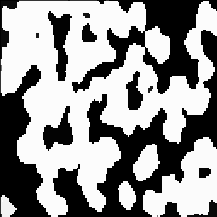}};
        \spy on (0,0.3) in node [left] at (0.7,-0.3);
        \end{tikzpicture} \\
    \end{tabular}
    \caption{\sl \textbf{Segmentation results.} 
    The two models, $M_\text{DSA}$ and $M_\text{baseline}$ were trained with and without the DSA layer, respectively. Note that $M_\text{DSA}$ generates better segmentation masks with fewer holes and enforced connectivity. Also note the cleaner edges compared to the standard segmentation results. Additional figures are in the Appendix.}
    \label{fig:segout}
\end{figure}


We observe that our DSA layer provides more consistent segmentation maps and higher Jaccard scores than the baseline model; see \Figref{fig:segout}. For the Weizmann horse dataset, $M_\text{conn}$ enforces the connectivity of the segmented objects while also limiting noise in the segmentation map. In the cell segmentation task, we note that the number of segments is high while the objects are small. Since the size of the components is small, our DSA layer Jacobian exhibits substantial differences from the commensurate identity gradient for the baseline models. \Tabref{tab:seg_acc} also shows the further improvement in Jaccard score on cell segmentation tasks over the Weizmann horse dataset. 


\begin{table}[t!]
    \centering
    \caption{\sl \textbf{Results for the horse and cell segmentation dataset:} Jaccard scores for the baseline and connected component models for the cell and horse segmentation task. From independent three runs with random seeds and the table reports mean and standard deviation. As the objects of interest (piecewise constant components) are smaller, the model with the DSA layer learns a better representation. Predictions are thresholded at $0.5$.}
    \begin{tabular}{c c c}
    \toprule[1.5pt] 
    \multicolumn{1}{l}{\textbf{Dataset}} & Baseline ($M_\text{baseline}$) & Baseline + DSA ($M_\text{DSA}$)  \\
    \midrule 
    \multicolumn{1}{l}{Weizmann Horse~\citep{borenstein2004learning}} & 72.06 $\pm$ 0.60   & \textbf{73.13 $\pm$ 0.31} \\ 
    \multicolumn{1}{l}{Broad Bioimage Benchmark~\citep{ljosa2012annotated}} & 79.34 $\pm$ 0.43   & \textbf{81.56 $\pm$ 0.24}\\ 
    \bottomrule
    \end{tabular}
    \label{tab:seg_acc}
\end{table}

\noindent\textbf{3D point cloud reconstruction using NURBS:}
Next, we provide results for two experiments using DSA with NURBS discussed in \Secref{SubSec:NURBS}. The first application is surface fitting for a complex benchmark surface represented by a mesh of surface points obtained by evaluating the benchmark test function at these points. We use Bukin function N.6 (publicly available \href{https://www.sfu.ca/~ssurjano/bukin6.html}{\textcolor{blue}{here}}) for generating a grid of $256\times256$ points as shown on the left of \Figref{fig:nurbs_fitting}. For fitting a NURBS surface from the defined target point cloud, we initialize a uniform clamped knot vector for a cubic basis function and random control points of size $8\times8$ points. Using DSA, we evaluate the NURBS surface for a uniform grid of $256\times256$ parametric points. We now evaluate the surface and use mean squared error for fitting the surface point cloud using NURBS. We consider two scenarios: (i) we do not update the knot vectors (i.e., no reparameterization), and (ii) we compute the gradients for the knot vectors and allow for reparameterization (i.e., change of knot locations). We provide the comparison of these scenarios in \Tabref{Tab:nurbsfitting}. We see that the reparameterization helps in reducing the error in fit by half. Also, we notice that the density of points evaluated has a very minimal impact on the performance (see more details in Appendix). Visually, in \Figref{fig:nurbs_fitting}, we see that two knots in the ``$v$'' direction come close to each other around $0.06$, enabling a sharp edge in the evaluated surface. 

The next experiment we present involves surface reconstruction from point clouds using a graph convolutional neural network and DSA for unsupervised training. We use the SplineNet method proposed by \citet{sharma2020parsenet} to be the baseline for point cloud reconstruction using splines. SplineNet uses a dynamic graph convolutional neural network (DGCNN) to predict the control points for a spline surface. The authors use a supervised control point loss to perform the training and also include regularizations such as the Laplacian loss and a patch distance (using Chamfer distance) loss. Instead, we perform this training in an unsupervised manner by not using the control points prediction loss and only using DSA to evaluate the surface and then apply regularization of minimizing the Laplacian of the surface. Since we can train this in an unsupervised manner, we can even use an arbitrary number of control points and are not restricted to the target control points.

\begin{figure}[t!]
    \centering
    \includegraphics[width=0.99\linewidth, trim={2.0in 2.0in 2.0in 2.0in}, clip]{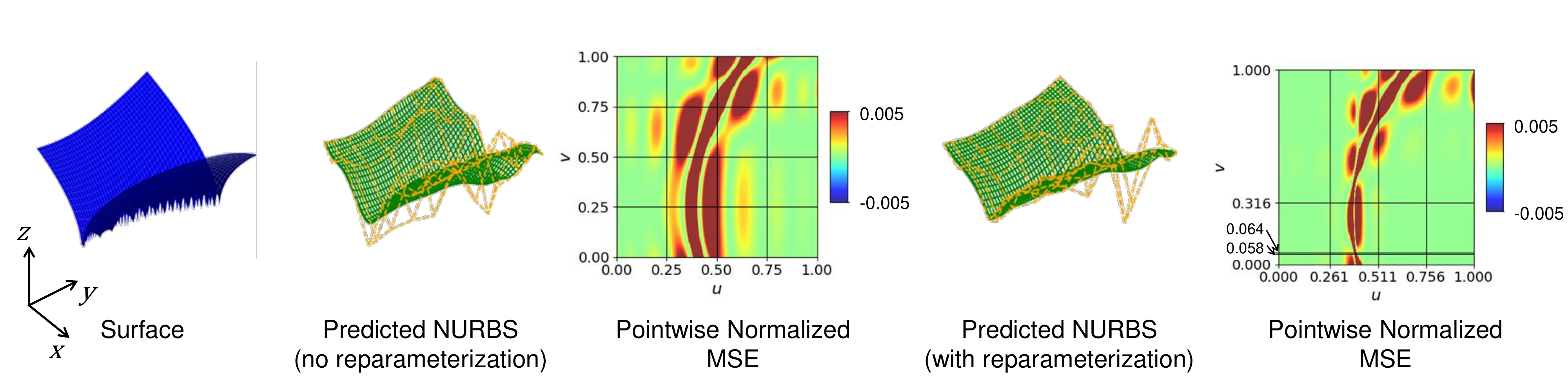}
    \caption{\sl \textbf{NURBS surface fitting results:} Surface fitting to point cloud generated using the Bukin's function N.6 given by $z=100\sqrt{|y-0.01x^2|}+0.01|x+10|; -15<x<-5, -3<y<3$. The center image shows the surface fit obtained without reparameterization of the knots. We obtain better fit by reparameterizing the knots.}
    \label{fig:nurbs_fitting}
\end{figure}

\begin{table*}[t!]
\centering
\caption{\sl \textbf{NURBS surface fitting results:} Comparison of mean squared error between the target surface point cloud and the surface generated using DSA with and without reparameterization.}
\label{Tab:nurbsfitting}
\setlength\extrarowheight{0.02in}
\newcommand{\tabincell}[2]{\begin{tabular}{@{}#1@{}}#2\end{tabular}}
    \begin{tabular}{c c c}
    \toprule[1.5pt]
 Number of Points & \tabincell{c}{$M_{DSA}$(without reparameterization)} & \tabincell{c}{$M_{DSA}$(with reparameterization)}  \\\hline

    $128\times128$ & 19.83 $\pm$ 0.001  & \textbf{8.25 $\pm$ 0.01} \\ 
    $256\times256$ & 19.85 $\pm$ 0.001  & \textbf{8.23 $\pm$ 0.02} \\ \hline

    \end{tabular}
\end{table*}

For a fair comparison, we use the same network, dataset, and hyperparameters as \citet{sharma2020parsenet} and change the loss functions by removing the control point regression loss. For comparison, we compute the chamfer distance between the input point cloud and the NURBS surface fit by the DGCNN model ($M_{DSA}$) (see Appendix for details of training). We use the Spline Dataset, which is a subset of surfaces extracted from the ABC dataset (available for public use under \href{https://deep-geometry.github.io/abc-dataset/#license}{this} license). In \Tabref{Tab:PointCloudReconstruction}, we provide a comparison of chamfer distance obtained between the predicted surface points from splines and the input point cloud for the test dataset. In our experiments, we observe that we get significantly better performance with fewer control points. This is because most of the surfaces in the dataset are simply curved surfaces that can be easily fit with fewer control points. 

\begin{table*}[t!]
\centering
\setlength\extrarowheight{0.02in}
\caption{\sl \textbf{Point-cloud reconstruction results:} Comparison between the model proposed by \citet{sharma2020parsenet} and its extension using DSA (with different number of control points). We compare the two sided chamfer distance (scaled by 100) between the input point cloud and the fitted surface.}
\label{Tab:PointCloudReconstruction}
\newcommand{\tabincell}[2]{\begin{tabular}{@{}#1@{}}#2\end{tabular}}
    \begin{tabular}{c c c c c}
    \toprule[1.5pt]
    \textbf{Experiment} & \tabincell{c}{$M_{baseline}$\\ ($20\times20$)} & \tabincell{c}{$M_{DSA}$\\ ($20\times20$)} & \tabincell{c}{$M_{DSA}$\\ ($5\times5$)} &  \tabincell{c}{$M_{DSA}$\\ ($4\times4$)}\\\hline

    \textbf{Chamfer Distance}& 1.18 $\pm$ 0.10 & 0.03 $\pm$ 0.02 & 0.14 $\pm$ 0.07 & \textbf{0.02 $\pm$ 0.01} \\ \hline

    \end{tabular}
    \vspace{0.03in}
\end{table*}


\noindent\textbf{PDE based surrogate physics priors:}\label{subsec:pde}
Finally, we leverage DSA in the context of solving PDEs as a prior. In particular, we consider the Poisson equation solved for $u$:
\begin{align}\label{eq:poisson-manufactured-pde}
-\grad\cdot({\nu}(\vec{x})\grad u) &= f(\vec{x}) \text{  in  } D \\
u|_{\partial D} &= 0\label{eq:poisson-manufactured-bc}
\end{align}
where $ D = [0,1]^2 $, a 2D square domain, $ \nu $ is the \textit{diffusivity} and $f$ is the forcing function. We consider two experiments here: (1) validation of our approach with an analytically known solution, and (2) extending this to learn the solutions for the parametric Poisson equation parameterized using $\nu$. 

For the first experiment, we set $\nu$ to $ 1 $ and the forcing $f = f(\mvec{x}) = f(x,y) = 2\pi^2\sin(\pi x) \sin(\pi y) $, and minimize the residual using the approach described in \Secref{sec:fem}. We know that for this PDE and the conditions provided, the exact solution is given by $ u_{ex}(x,y) = \sin(\pi x) \sin(\pi y) $. We compare our results ($u_{DSA}$) with the exact solution $u_{ex}$. Also, we perform this experiment with Lagrange polynomials of different degrees. Further, we compare our results with results obtained using PINNs~\citep{raissi2017physics1}. We obtain significantly better performance (lesser $\ell_2$-error by an order of magnitude) compared to PINNs, owing to more accurate gradients computed using our DSA approach. The performance improvement with increase in degree of polynomial in lower resolutions is more pronounced than at higher resolutions.

\begin{table*}[!t]
	\centering
	\caption{\sl \textbf{Quantitative comparison of Solving PDEs:} $L_2$ Norm between the analytical exact solution $u_{ex}$ and predicted $u$ using PINNs~\citep{raissi2017physics1} and DSA with different degrees of the Lagrange polynomials.
	}
    \resizebox{0.99\linewidth}{!}{
    \begin{tabular}{c c c c c c}
        \toprule[1.5pt]
        Model &  & $PINN$  & $DSA\,(d=1)$  & $DSA\,(d=2)$ & $DSA\,(d=3)$\\
        \midrule 
        \multirow{2}{*}{$L_2$ Norm} & $128\times 128$ &  3.72 $\pm$ 0.20 E-4  &  3.32 $\pm $ 0.05 E-5  & \textbf{2.16 $\pm$ 0.04 E-5}   & 2.37 $\pm$  0.10 E-5   \\
        & $256\times 256$ & 2.63 $\pm$ 0.20 E-4 & \textbf{2.57 $\pm$ 0.01 E-5} & 2.79 $\pm$ 0.20 E-5 & 2.59 $\pm$ 0.10 E-5 \\
        \bottomrule
        \end{tabular}
    }
    \label{tab:fe_comp}
\end{table*}

\begin{figure*}[!b]
	\centering
	\includegraphics[trim=4.0in 2.55in 4.0in 2.7in,clip,width=0.9\linewidth]{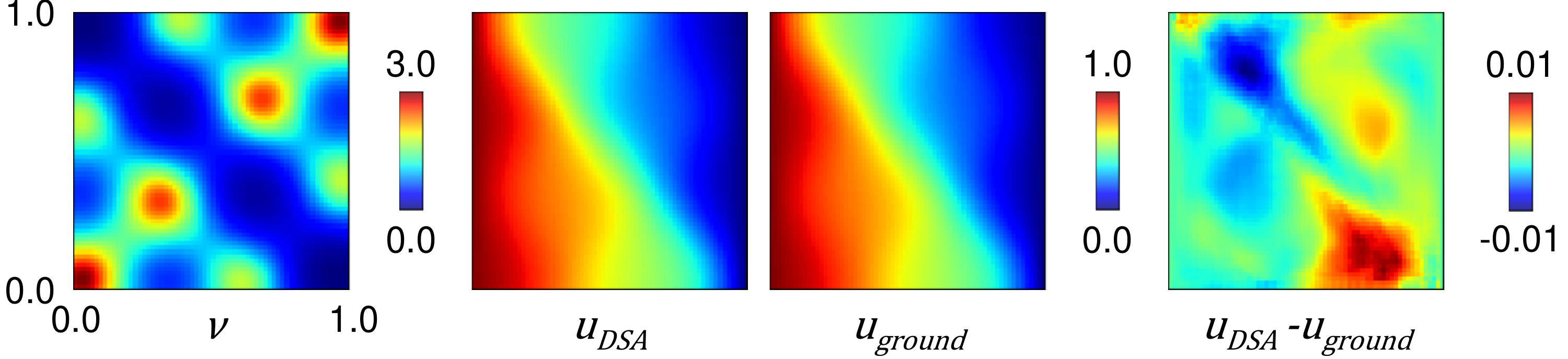}
	\caption{\sl \textbf{Learning a parametric family of PDE solutions:} Poisson's equation with log permeability coefficients  $\vec{\omega} = (-0.26, -0.77, -0.37, -0.92)$ in the Poisson's equation.
	}
	\label{fig:poisson-kl-contours-1}
\end{figure*}

Next, we present results for training a deep learning network with a prior for solving a \textit{parametric} Poisson's equation. The input to the network are different diffusivity maps $\nu$ sampled from 
\begin{align}\label{def:poisson-nu-harmonic-expansion}
	\nu(\vec{x}; \omega) 
	& = \exp\left(\sum_{i = 1}^{m} \omega_i \lambda_i \xi_i(x)\eta_i(y) \right)
\end{align}
where $\omega_i$ is an $m$-dimensional parameter, $ \lambda $ is a vector of real numbers with monotonically decreasing values arranged in order; and $ \xi $ and $ \eta $ are functions of $ x $ and $ y $ respectively. We take $ m = 4 $, $ \vec{\omega} = [-3,3]^4 $ and $ {\lambda_i} = \frac{1}{(1+0.25a_i^2)}$, where $ \vec{a} = (1.72, 4.05, 6.85, 9.82)$. Also $ \xi_i(x) = \frac{a_i}{2}\cos(a_i x) + \sin(a_i x) $ and $ \eta(y) = \frac{a_i}{2}\cos(a_i y) + \sin(a_i y) $. We generate several diffusivity maps by sampling this function with different values of $\vec{\omega}$. We use a UNet~\citep{ronneberger2015u} that takes these diffusivity maps and predicts the solution $u$, which is further optimized with the residual minimizing prior to the Poisson's equation. Thus, we obtain a trained neural network that predicts the solution field $u$ for any unknown diffusivity maps from the data distribution. We provide the predicted result along with its comparison with traditional numerical FEM results in \Figref{fig:poisson-kl-contours-1}. Visually, we see both the predicted solution field map ($u_{DSA}$) and the actual solution field ($u_{ground}$) obtained using traditional numerical methods match each other. The right most image shows the difference between both with the maximum deviation to be $0.01$, showing the accuracy of our (easy-to-implement) DSA-based FEM solver.


\section{Broader Impact and Discussion}
\label{sec:discussion}

We introduce a principled approach to estimate gradients for spline approximations. Specifically, we derive the (weak) Jacobian in the form of a block-sparse matrix based on the partitions generated by any spline approximation algorithm (which serves as the forward pass). The block structure allows for fast computation of the backward pass, thereby extending the application of differentiable programs (such as deep neural networks) to tasks involving splines. Our methods show superior performance than the state-of-the-art curve fitting methods by reducing the chamfer distance by an order of magnitude and the mean squared error in the case of surface fitting by a factor of two. Further, with the application of our methods in finite element analysis, we show significantly better performance compared to state-of-the-art physics-informed neural networks. 

Our method is quite generic and may impact applications such as computer graphics, physics simulations, and engineering design. Care should be taken to ensure that these applications are deployed responsibly. Future works include further algorithmic understanding of the inductive bias encoded by DSA layers, and dealing with splines having a dynamically chosen number of parameters (control points and knots).



\section*{Acknowledgements}
This work was supported in part by the National Science Foundation under grants CCF-2005804, LEAP-HI:2053760, CMMI:1644441, CPS-FRONTIER:1954556, USDA-NIFA:2021-67021-35329 and ARPA-E DIFFERENTIATE:DE-AR0001215. Any information provided and opinions expressed in this material are those of the author(s) and do not necessarily reflect the views of, nor any endorsements by, the funding agencies.


{
\bibliographystyle{unsrtnat}
\bibliography{NURBSML,ICLR}
}



\appendix
\section*{Appendix}
\section{Proofs and derivations}
\label{appdx:proof}
\setcounter{theorem}{0}
\begin{theorem}
    \label{thm:blockwise2}
    The Jacobian of the operation $F$ with respect to $x \in \R^n$ can be expressed as a \emph{block diagonal} matrix, $\rmJ \in \R^{n \times n}$, whose $(s,t)^{\textrm{th}}$ entry obeys:
    \begin{eqnarray}
        \rmJ_x(F(x)) (s, t) = \frac{\partial h(x)_s}{\partial x_t} = \begin{cases}
        \frac{\partial h_{I_i}(x)_s}{\partial x_t} &\textnormal{if } s, t \in I_i \\
        0 &\textnormal{otherwise}
    \end{cases}
    \end{eqnarray}
\end{theorem}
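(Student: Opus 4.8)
The plan is to exploit the two-stage structure of the projection $F$: the optimization over $S^k_d$ decomposes into (i) a combinatorial selection of the optimal partition $\mathcal{I}^\star = \{I_1,\ldots,I_k\}$ from among the finitely many candidate partitions of $[n]$, followed by (ii) an orthogonal projection onto the fixed $k(d+1)$-dimensional subspace determined by that partition. First I would fix a generic input $x$ and argue that the optimal partition returned by the forward pass is \emph{locally constant}. Because $S^k_d$ is a finite union of linear subspaces, the squared-distance objective restricted to each candidate subspace is a smooth (in fact quadratic) function of $x$, and the forward pass returns the pointwise minimum of these finitely many smooth functions. The minimizing index can change only where two of these quadratics coincide, which is a measure-zero union of algebraic hypersurfaces. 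Hence, away from this degenerate set, there is a neighborhood of $x$ on which $\mathcal{I}^\star$ does not change, and on that neighborhood $F$ coincides with the smooth projection onto a single fixed subspace.

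Next, with $\mathcal{I}^\star$ frozen, I would show the projection decouples completely across intervals. Since the candidate subspace for a fixed partition is an orthogonal direct sum of the per-interval degree-$d$ polynomial subspaces — the coordinates indexed by distinct intervals $I_i$ are disjoint — the least-squares fit within $I_i$ depends only on the entries $x_{I_i}$ and is independent of the entries in any other interval; concretely, $h_{I_i}(x) = h_{I_i}(x_{I_i})$. Differentiating: if $s \in I_i$ and $t \in I_j$ with $i \neq j$, then $h_s$ has no functional dependence on $x_t$, so $\partial h_s / \partial x_t = 0$; and if $s,t$ lie in the same interval $I_i$, the derivative equals the per-block derivative $\partial h_{I_i}(x)_s / \partial x_t$. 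Collecting these entries yields exactly the claimed block-diagonal matrix.

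The main obstacle is the first step — establishing local constancy of the partition, which is what makes the Jacobian well-defined at all. Because $F$ is genuinely non-differentiable across the degenerate set where the optimal partition jumps, I cannot claim a classical Jacobian everywhere; instead I would invoke the weak-Jacobian notion (following Blondel et al.) and define $\rmJ$ by the value it takes on the full-measure set where the partition is locally stable, using one-sided limits at the boundaries if needed. I would also confirm that the per-interval least-squares map is smooth in $x_{I_i}$: for a fixed basis of degree-$d$ polynomials sampled on $I_i$, it is the standard closed-form linear projection $h_{I_i} = A_i (A_i^\top A_i)^{-1} A_i^\top x_{I_i}$, whose Jacobian block is the constant projection matrix $A_i (A_i^\top A_i)^{-1} A_i^\top$. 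This simultaneously confirms differentiability within each block and recovers the $d=0$ special case, where $A_i = \mathbf{1}$ and the block reduces to the all-ones averaging matrix scaled by $1/|I_i|$.
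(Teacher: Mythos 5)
Your proposal is correct and follows essentially the same route as the paper's proof: fix the partition returned by the forward pass, observe that the least-squares fit decouples across the disjoint coordinate intervals, and differentiate blockwise. The paper's version is much terser---it delegates the local-constancy-of-the-partition issue to the cited Proposition 4 of Blondel et al.\ and simply asserts the case split---whereas you spell out the measure-zero/weak-Jacobian argument and the explicit per-block hat matrix $A_i (A_i^\top A_i)^{-1} A_i^\top$ (which the paper only derives later, in its appendix on piecewise polynomial regression), so no substantive difference in approach, only in the level of detail.
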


\begin{proof}
    The proof follows similar arguments as in Proposition 4 from \citet{Blondel2020FastDS}. 
    
    Let $\mathcal{I} = \{I_1, I_2,\cdots, I_k\}$ be $k$ partitions induced by some $H:\R \to \R$ for some input, $\rvx \in \R^n$ and $h \in \R^n$ be a vector from $n$ equally spaced evaluated $H$ in its domain. Then, each element, $x_i$ uniquely belongs to some partition $I_r$. 
    
    Now, 
    \begin{align*}
        \rmJ_x(F(x)) (s, t) &= \frac{\partial \sum_{j=1}^k h(x)_s \odot \mathbbm{1}(s \in I_j)}{\partial x_t} \\
        &= \begin{cases}
            \frac{\partial h(x)_s}{\partial x_t} & \text{if}~~s, t \in I_r \\
            0 & \text{otherwise}
        \end{cases}
    \end{align*}
    
    Note that this is a block-diagonal matrix with each block being $|I_r| \times |I_r|$, giving us the required statement.
\end{proof}

    
    
    

\section{Application of DSA to piecewise polynomial regression}
\label{appx:piecewise-poly}
\paragraph{1D piecewise constant regression:}
\label{subsec:const}

We first provide the notations we provided in Section~\ref{sec:diffpiece}. 

Let $f \in \R^n$ be a vector where the $i{\textsuperscript{th}}$ element is denoted as $f_i$. 
Let us use $[n] = \{1,2,\ldots,n\}$ to denote the set of all coordinate indices. For a vector $f \in \R^n$ and an index set $I \subseteq [n]$, let $f_I$ be the restriction of $f$ to $I$, i.e., for $i\in I$, we have $f_I(i):=f_i$, and $f_I(i):=0$ for $i \notin I$. Now, consider any fixed partition of $[n]$ into a set of disjoint intervals $\mathcal{I}=\{I_1, \ldots, I_k\}$ where the number of intervals $|\mathcal{I}|=k$. The $\ell_2$-norm of $f$ is written as $\|f\|_2:=\sqrt{\sum_{i=1}^n f_i^2}$ while the $\ell_2$ distance between $f, g$ is written as $\|f-g\|_2$. Finally, $\mathbbm{1}_I \in \{0, 1\}^n$ is a indicator vector where for $i \in I$, $\mathbbm{1}_I(i) = 1$ and for $i \notin I$, $\mathbbm{1}_I(i) = 0$.

We consider the case of $k$-piecewise regression in 1D, where we can use any algorithm to approximate a given input vector with a fixed number of piecewise polynomial functions. The simplest example is that of $k$-piecewise \emph{constant} regression, where a given input vector is approximated by a set of constant segments. 

Formally, consider a piecewise constant function $H: \R \to \R$ with $k$ pieces. Similar to spline, we evaluate $H$ at any $n$ equally spaced points in its domain. This gives us a vector $h \in \R^n$, which we call a $k$-piecewise constant vector. Since the best (in terms of $\ell_2$-norm) constant approximation to a function is its mean, a $k$-piecewise constant function approximation can be reparameterized over the collection of all disjoint intervals $\mathcal{I} = \{I_1, \ldots, I_k\}$ of $[n]$ such that given $\rvx$: 
\begin{align}
    &\min_{I_1,\ldots,I_k} \sum_{i=1}^n \sum_{j=1}^k (h_{I_j}(i) - x_i)^2 =\min_{I_1, \ldots, I_k} \sum_{j=1}^{k} \sum_{i \in I_j}( \frac{1}{|I_j|}\sum_{l\in I_j}x_l - x_i)^2 
\end{align}

We assume an optimal $H$ (parameterized by $\{I_i\}$ that can be obtained using many existing methods (a classical approach by dynamic programming~\citep{DBLP:conf/vldb/JagadishKMPSS98}). The running time of such approaches is typically $O(nk)$, which is constant for fixed $k$; see \citet{acharya2015fast} for a more detailed treatment.

Using Theorem~\ref{thm:blockwise}, the Jacobian of the output $k$-histogram with respect to $\rvx$ assumes the following form: 
\begin{align}
    \frac{\partial h}{\partial x_i} = \frac{\partial}{\partial x_i} \sum_{j=1}^k (\frac{1}{|I_j|}\sum_{l \in I_j} x_l)) &= \frac{\partial}{\partial x_i} \sum_{j=1}^k (\frac{1}{|I_j|}(\sum_{l \in I_j} x_l) \mathbbm{1}_{I_j}) \\
    &= \sum_{j=1}^k \frac{\partial}{\partial x_i} \frac{1}{|I_j|} (\sum_{l \in I_j} \mathbbm{1}_{I_j}) = \frac{1}{|I_j|} \mathbbm{1}_{I_j}
\end{align}

Therefore, the Jacobian of $h$ with respect to $\rvx$ forms the block-diagonal matrix $\mathbf{J} \in \mathbb{R}^{n \times n}$:
\begin{align*}
    \mathbf{J} =
    \begin{bmatrix}
        \mathbf{J}_1 & \mathbf{0} & \ldots & \mathbf{0} \\
        \mathbf{0} & \mathbf{J}_2 & \ldots & \mathbf{0} \\
        \vdots & \vdots & \ddots & \vdots \\
        \mathbf{0} & \mathbf{0} & \ldots & \rmJ_k
    \end{bmatrix} 
\end{align*}
where all entries of $\rmJ_i \in \mathbb{R}^{|I_i|\times|I_i|}$ equal to $1/|I_i|$. Note here that the sparse structure of the Jacobian allows for fast computation, and it can be easily seen that computing the Jacobian vector product $\rmJ^T \nu$ for any input $\nu$ requires $O(n)$ running time. As an additional benefit, the decoupling induced by the partition enables further speed up in computation via parallelization.

\paragraph{Generalization to 1D piecewise polynomial fitting:}
\label{subsec:genpoly}
We now derive differentiable forms of generalized piecewise $d$-polynomial regression, which is used in applications such as spline fittings.

As before, $H: \R \to \R$ is any algorithm to compute the $k$-piecewise $d$ polynomial approximation of an input vector $\rvx \in \R^d$ that outputs partition $\mathcal{I} = \{I_1, \ldots, I_k\}$. Similarly, the function $H$ gives us a vector $h \in \R^n$, a $k$-piecewise polynomial vector. Then, for each partition, we are required to solve a $d$-degree polynomial regressions. Generally, the polynomial regression problem is simplified to linear regression by leveraging a Vandermonde matrix. We get a similar closed-form expression for the coefficient as in Section~\ref{SubSec:NURBS}.

Assume that for partition $I_j$, the input indices $t_{I_j}(i)$ is i\textsuperscript{th} element in an index vector corresponding to the $I_j$ partition. Then, the input indices $t_{I_j}(i)$ are represented as a Vandermonde matrix, $\rmV_{I_j}$:
\begin{align*}
\mathbf{V}_{I_j} = 
\begin{bmatrix}
    1 & t_{I_j}(1) & t_{I_j}(1)^2 & \cdots & t_{I_j}(1)^d \\
    1 & t_{I_j}(2) & t_{I_j}(2)^2 & \cdots & t_{I_j}(2)^d \\
    \vdots & \vdots & \vdots & \ddots & \vdots \\
    1 & t_{I_j}(|I_j|) & t_{I_j}(|I_j|)^2 & \cdots & t_{I_j}(|I_j|)^d
\end{bmatrix} \, .
\end{align*}
It can be shown that the optimal polynomial coefficient $\alpha_{I_j}$ corresponding to the partition (or disjoint interval) $I_j$ have the following closed form:
$$\alpha_{I_j} = (\rmV_{I_j}^{T}\rmV_{I_j})^{-1}\rmV_{I_j}^T\rvx_{I_j},$$
where $\rvx_{I_j} \in \R^{|I_j|}$ is a vector $\rvx$ length of $|I_j|$ corresponding to the $I_j$ partition such that $\rvx_{I_j}(i) = x_i$ if $i \in I_j$ and undefined if $i \notin I_j$.
This can be computed in $O(knd^w)$ time where $w$ is the matrix-multiplication exponent~\citep{GKS06}. Then using Theorem~\ref{thm:blockwise} and the gradient for polynomial regression, the Jacobian of $h_{I_j}$ with respect to $\rvx$ forms a blockwise sparse matrix:
\begin{align*}
    \frac{\partial h_{I_j}(s)}{\partial x_l} &= \frac{\partial}{\partial x_l} (\langle \alpha_{I_j}, [\rmV_{I_j}^T]_s \rangle) = \frac{\partial}{\partial x_l} (\langle  (\rmV_{I_j}^{T}\rmV_{I_j})^{-1}\rmV_{I_j}^T\rvx_{I_j}, [\rmV_{I_j}^T]_s \rangle) \\
    &= \frac{\partial}{\partial x_l} [\rmV_{I_{j}}^T]_s^T (\rmV_{I_j}^T \rmV_{I_j})^{-1} \rmV_{I_j}^T \rvx_{I_j}\\ 
    &=
    \begin{cases}
        \big[\rmV_{I_j}(\rmV_{I_j}^T\rmV_{I_j})^{-1}[\rmV_{I_j}^T])_s\big]_l & \text{if } l, s \in I_j \\
        0 &\text{otherwise.}
    \end{cases}
\end{align*}

The two main takeaways here are as follows: (1) $\rmV_{I_i}$ can be precomputed for all possible $n-1$ partition sizes, thus allowing for fast ($O(n)$) computation of Jacobian-vector products; and (2) an added flexibility is that we can independently control the degree of the polynomial used in each of the partitions. The second advantage could be very useful for heterogeneous data as well as considering boundary cases in data streams.

\subsection{2D piecewise constant functions}
\label{appdx:2dpiecewise}
Our 1D piecewise spline approximation can be (heuristically) extended to 2D data. We provide the detailed descriptions. We consider the problem of image segmentation, which can be viewed as representing the domain of an image into a disjoint union of subsets. Neural-network based segmentation involves training a model (deep or otherwise) to map the input image to a {segmentation map}, which is a piecewise constant spline function. However, standard neural models trained in a supervised manner with image-segmentation map pairs would generate pixel-wise predictions, which could lead to disconnected regions (or holes) as predictions. 
We leverage our approach to enforce deep models to predict piecewise constant segmentation maps.  In case of 2D images, note that we do not have a standard primitive (for piecewise constant fitting) to serve as the forward pass. Instead, we leverage connected-component algorithms (such as Hoshen-Kopelman, or other, techniques~\citep{wu2005optimizing}) to produce a partition, and the predicted output is a piecewise constant image with values representing the mean of input pixels in the corresponding piece. For the backward pass, we use a tensor generalization of the block Jacobian where each partition is now represented as a channel which is only non-zero in the positions corresponding to the channel. Formally, if the image $\rvx \in \R^{n}$ is represented as the union of $k$ partitions, $h = \bigcup_{i=1}^k I_i$, the Jacobian, $\rmJ_{\rvx} = \partial h / \partial \rvx \in \R^{n\times n}$ and, 
\begin{equation}
    \rmJ_{\rvx}(F(x))(s, t) = \begin{cases} \frac{\partial h(x)_s}{\partial x_t} = \frac{1}{|I_i|} & \text{if}~~s, t \in I_i,  \\
                                0 & \text{otherwise.}
                    \end{cases} 
\end{equation}
Note that $I_i$ here no longer correspond to single blocks in the Jacobian. Here, they will reflect the positions of pixels associated with the various components. However, the Jacobian is still sparsely structured, enabling fast vector operations.

\section{Implementing DSA with NURBS}

\subsection{Backward evaluation for NURBS surface}\label{SubSec:Backward}
In a modular machine learning system, each computational layer requires the gradient of a loss function with respect to the output tensor for the backward computation or the backpropagation. For our NURBS evaluation layer this corresponds to $\nicefrac{\partial{\mathcal{L}}}{\partial{\mathcal{S}}}$ . As an output to the backward pass, we need to provide $\nicefrac{\partial{\mathcal{L}}}{\partial{\Psi}}$. While we represent $\mathcal{S}$ for the boundary surface, computationally, we only compute $\vec{S}$ (the set of surface points evaluated from $\mathcal{S}$). Therefore, we would be using the notation of ${\partial{\vec{S}}}$ instead of ${\partial{\mathcal{S}}}$ to represent the gradients with respect to the boundary surface. Here, we make an assumption that with increasing the number of evaluated points, ${\partial{\vec{S}}}$ will asymptotically converge to ${\partial{\mathcal{S}}}$. Now, we explain the computation of $\nicefrac{\partial{\vec{S}}}{\partial{\Psi}}$ in order to compute $\nicefrac{\partial{\mathcal{L}}}{\partial{\Psi}}$ using the chain rule. In order to explain the implementation of the backward algorithm, we first explain the NURBS derivatives for a given surface point with respect to the different NURBS parameters.

\subsection {NURBS derivatives}
We rewrite the NURBS formulation as follows:
\begin{equation}
\vec{S}(u,v) = \frac{\vec{NR}(u,v)}{w(u,v)}
\label{eq:NURBSsplit}
\end{equation}
where,  
\begin{equation*}
\vec{NR}(u,v) = \sum_{i=0}^n{ \sum_{j=0}^m{N_i^p(u) N_j^q(v)w_{ij}\vec{P}_{ij} } }
\end{equation*}

\begin{equation*}
{w}(u,v) = \sum_{i=0}^n\sum_{j=0}^m{N_i^p(u) N_j^q(v)w_{ij}}
\end{equation*}

For the forward evaluation of $\vec{S}(u,v) =\mathbf{f} \left( \vec{P}\,,\vec{U}\,,\vec{V}\,,\vec{W} \right)$, we can define four derivatives for a given surface evaluation point: $\vec{S}_{,u} :=  \nicefrac{\partial{\vec{S}(u,v)}}{\partial{u}}$, $\vec{S}_{,v} :=  \nicefrac{\partial{\vec{S}(u,v)}}{\partial{v}}$, $\vec{S}_{,\vec{P}} :=  \nicefrac{\partial{\vec{S}(u,v)}}{\partial{\vec{P}}}$, and $\vec{S}_{,\vec{W}} :=  \nicefrac{\partial{\vec{S}(u,v)}}{\partial{\vec{W}}}$. Note that,  $\vec{S}_{,\vec{P}}$ and $\vec{S}_{,\vec{W}}$ are represented as a vector of gradients $\{\vec{S}_{,P_{ij}} \forall P_{ij} \in \vec{P}\}$ and $\{\vec{S}_{w_{ij}} \forall w_{ij} \in \vec{W}\}$. Now, we show the mathematical form of each of these four derivatives. The first derivative is traditionally known as the parametric surface derivative, $\vec{S}_{,u}$. Here, $N_{i,u}^p(u)$ refers to the derivative of basis functions with respect to $u$.

\begin{equation}
	\vec{S}_{,u}(u,v)= \frac{\vec{NR}_{,u}(u,v)w(u,v) - \vec{NR}(u,v)w_{,u}(u,v)}{w(u,v)^2}
	\label{eq:NURBSuDerivative1}
\end{equation}

where,

\begin{equation*}
	\vec{NR}_{,u}(u,v)=\sum_{i=0}^n{\sum_{j=0}^m{N_{i,u}^p(u) N_j^q(v)w_{ij}\vec{P}_{ij} }}
\end{equation*}

\begin{equation*}
{w}_{,u}(u,v) = \sum_{i=0}^n\sum_{j=0}^m{N_{i,u}^p(u) N_j^q(v)w_{ij}}
\end{equation*}

A similar surface point derivative could be defined for $\vec{S}_{,v}$. These derivatives are useful in the sense of differential geometry of NURBS for several CAD applications~\citep{Krishnamurthy-2009}. However, since many deep learning applications such as surface fitting are not dependent on the $(u,v)$ parametric coordinates, we do not use it in our layer. Also, note that $\vec{S}_{,u}$  and $\vec{S}_{,v}$ are not the same as $\vec{S}_{,\vec{U}}$ and $\vec{S}_{,\vec{V}}$. A discussion about $\vec{S}_{,\vec{U}}$ and $\vec{S}_{,\vec{V}}$ is provided later in this section. Now, let us define $\vec{S}_{,p_{ij}}(u,v)$.

\begin{equation}
	\vec{S}_{,\vec{P}_{ij}}(u,v)= \frac{N_i^p(u) N_j^q(v)w_{ij}}{\sum_{k=0}^n\sum_{l=0}^m{N_k^p(u) N_l^q(v)w_{kl}}}
	\label{eq:NURBSuDerivative4}
\end{equation}

$\vec{S}_{,\vec{P}_{ij}}(u,v)$ is the rational basis functions themselves. Computing $\vec{S}_{,w_{ij}}(u,v)$ is more involved with $w_{ij}$ terms in both the numerator and the denominator of the evaluation.

\begin{equation}
	\vec{S}_{,w_{ij}}(u,v)=\frac{\vec{NR}_{,w_{ij}}(u,v)w(u,v) - \vec{NR}(u,v)w_{,w_{ij}}(u,v)}{w(u,v)^2}
	\label{eq:NURBSvDerivative5}
\end{equation}
where, 
\begin{equation*}
	\vec{NR}_{,w_{ij}}(u,v)={N_{i}^p(u) N_j^q(v)\vec{P}_{ij}}
\end{equation*}

\begin{equation*}
    {w}_{,w_{ij}}(u,v) = {N_{i}^p(u) N_j^q(v)}
\end{equation*}

\subsection{Derivatives with respect to knot points}
For simplicity, we will stick to 1D NURBS curves. The extension to 2D surfaces is straightforward by taking Kronecker products.

We recall the definition of the NURBS basis:
\begin{equation}
	N_i^d(u)=\frac{u-u_i}{u_{i+d}-u_i}N_i^{d-1}(u)+\frac{u_{i+d+1}-u}{u_{i+d+1}-u_{i+1}}N_{i+1}^{d-1}(u),~
	N_i^0(u) = \left\{
	\begin{array}{l l}
		1  \quad \mbox{if $u_i\leq u\leq u_{i+1}$} \\
		0  \quad \mbox{otherwise}
	\end{array} \right.
\end{equation}

The goal is to evaluate the derivative of $N_i^d(u)$ with respect to the knot points $\{u_i\}$. We observe that due to the recursive nature of the definition, we can accordingly compute the derivatives of $N_i^d(u)$ in a recursive fashion using chain rule, \emph{provided} we can evaluate:
\[
\frac{\partial N_i^0(u)}{\partial u_i} = \frac{\partial \mathbf{1}([u_i,u_{i+1}])}{\partial u_i}
\]
(and likewise for $u_{i+1}$) where $\mathbf{1}$ denotes the indicator function over an interval. However, this derivative is not well-defined since the gradient is zero everywhere and undefined at the interval edges.

We propose to approximate this derivative using \emph{Gaussian smoothing}. Rewrite the interval as the difference between step functions convolved with deltas shifted by $u_i$ and $u_{i+1}$ respectively:
\[
\mathbf{1}([u_i,u_{i+1}))(u) = \text{sign}(u) \star \delta(u - u_{i}) - \text{sign}(u) \star \delta(u - u_{i+1})
\]
and approximate the delta function with a Gaussian of sufficiently small (but constant) bandwidth:
\[
\mathbf{1}([u_i,u_{i+1}])(u) = \text{sign}(u) \star G_\sigma(u - u_{i}) - \text{sign}(u) \star G_\sigma(u - u_{i+1})
\]
where
\[
G_\sigma(u - \mu) = \frac{1}{\sqrt{2\pi \sigma^2}} \exp(- \frac{(u-\mu)^2}{2\sigma^2}).
\]
The derivative with respect to $\mu$ is therefore given by:
\[
G_\sigma'(u=\mu) = \frac{(u - \mu)}{2\sigma^2} G_\sigma(u - \mu) ,
\]
which means that the approximate gradient introduces a multiplicative $(u - \mu)$ factor with the original basis function. Propagating this through the chain rule and applying a similar strategy as Cox-de Boor recursion gives us Algorithm~\ref{Alg:Backward}. \qedsymbol

\section{Experimental details}
\label{appdx:expts}

\subsection{Segmentation}

\paragraph{Weizmann Horse dataset:} The dataset consists of 378 images of single horses with varied backgrounds and their corresponding ground truth. We divide the dataset into 85:15 ratios for training and testing, respectively. Further, each image is normalized to a $[0,1]$ domain by dividing it by $256$. 5443

\paragraph{Cell dataset:} The dataset consists of 19K gray-scale images containing various cells, and we take $1900$ subset images as the dataset. We divide the dataset into 85:15 ratios for training and testing, respectively. Similarly, we normalize the image to a $[0, 1]$ by dividing each pixel by $256$.

\paragraph{Architecture and training:} We use the following U-Net architecture for training our segmentation networks. While we use the equivalent model skeleton reported by \citet{ronneberger2015u}, we scale down the network size starting the initial channels $C=8$ (default channel is $C=64$). In both dataset, we train the network 1000 epochs with initial learning rate $0.0003$. We leverage Adam optimizer with $\beta = (0.9, 0.999)$ and weight decay $0.0001$. We use a binary cross entropy loss function as the objective function.

\subsection{NURBS surface fitting implementation}
The complete algorithm for forward evaluation of $\Vec{S}(u,v)$ as described in \citet{10.5555/265261} can be divided into three steps: 
\begin{enumerate}
\item Finding the knot span of $u \in [u_i,u_{i+1})$ and the knot span of $v \in [v_j,v_{j+1})$, where $u_i, u_{i+1} \in \mathbf{U}$ and $v_ j, v_{j+1} \in \mathbf{V}$. This is required for the efficient computation of only the non-zero basis functions.
\item Now, we compute the non-zero basis functions $N_i^p(u)$ and $N_j^q(v)$ using the knot span. The basis functions have specific mathematical properties that help us in evaluating them efficiently. The partition of unity and the recursion formula ensures that the basis functions are non-zero only over a finite span of $p+1$ control points. Therefore, we only compute those $p+1$ non-zero basis functions instead of the entire $n$ basis function. Similarly in the $v$ direction we only compute $q+1$ basis functions instead of $m$.
\item We first compute the weighted control points $\vec{P}^w_{ij}$ for a given control point $\vec{P}_{ij}=\{\vec{P}_x, \vec{P}_y , \vec{P}_z\}$ and weight $w_{ij}$ as $\{\vec{P}_x w, \vec{P}_y w, \vec{P}_z w\}$ representing the surface after homogeneous transformation for ease of computation. Once the basis functions are computed we multiply the non-zero basis functions with the corresponding weighted control points, $\vec{P}^w_{ij}$. This result, $\vec{S'}$ is then used to compute $\vec{S}(u,v)$ as $\{ S'_{x}/S'_{w}, S'_{y}/S'_{w}, S'_{z}/S'_{w}\}$.
\end{enumerate}

\begin{algorithm}[h!]
    \caption{Forward algorithm for multiple surfaces\label{Alg:Forward}}
    \SetKwInOut{Input}{Input}
    \SetKwInOut{Output}{Output}

    \Input{$\vec{U}$, $\vec{V}$, $\vec{P}$, $\vec{W}$, output resolution $n_{grid}$, $m_{grid}$}
    \Output{$\vec{S}$}
    
    \text{Initialize a meshgrid of parametric coordinates}\\
    \text{\hspace{0.3in}uniformly from $[0,1]$ using $n_{grid}\times m_{grid}$ : $u_{grid} \times v_{grid}$}\\
    \text{Initialize: $\vec{S} \rightarrow \vec{0}$}\\
    \For{$k = 1: surfaces$ in \textbf{\emph{parallel}}}
    {
    \For{$j=1:m_{grid}$ points in \textbf{\emph{parallel}}}
      {
        \For{$i=1:n_{grid}$ points in \textbf{\emph{parallel}}}
          {
            Compute $u_{span}$ and $v_{span}$ for the corresponding $u_i$ and $v_i$ using knot vectors $\vec{U_k}$ and $\vec{V_k}$ \\
            Compute basis functions $N_i$ and $N_j$ basis functions using $u_{span}$ and $v_{span}$ and knot vectors $\vec{U_k}$ and $\vec{V_k}$\\
            Compute surface point $\vec{S}(u_i,v_j)$ (in $x$, $y$, and $z$ directions).\\
            Store $u_{span}$, $v_{span}$, $N_i$, $N_j$, and $\vec{S}(u_i,v_j)$ for backward computation
          }
      }
  
    }
\end{algorithm}

In a deep learning system, each layer is considered as an independent unit performing the computation. The layer takes a batch of input during the forward pass and transforms them using the parameters of the layer. Further, in order to reduce the computations needed during the backward pass, we store extra information for computing the gradients during the forward computation. The NURBS layer takes as input the control points, weights, and knot vectors for a batch of NURBS surfaces. We define a parameter to control the number of points evaluated from the NURBS surface. We define a mesh grid of a uniformly spaced set of parametric coordinates $u_{grid}\times v_{grid}$. We perform a parallel evaluation of each surface point $S(u,v)$ in the $u_{grid}\times v_{grid}$ for all surfaces in the batch and store all the required information for the backward computation. The complete algorithm is explained in \Algref{Alg:Forward}. Our implementation is robust and modular for different applications. For example, if an end-user desires to use this for a B-spline evaluation, they need to set the knot vectors to be uniform and weights $\vec{W}$ to be $1.0$. In this case, the forward evaluation can be simplified to $\vec{S}(u,v) = \vec{f}(\vec{P})$. Further, we can also pre-compute the knot spans and basis functions during the initialization of the NURBS layer. During computation, we could make use of tensor comprehension that significantly increases the computational speed. We can also handle NUBS (Non-Uniform  B-splines), where the knot vectors are still non-uniform, but the weights $W$ are set to $1.0$. Note in the case of  B-splines $\Psi = \{\vec{P}\}$ (the output from the deep learning framework) and in the case of NUBS $\Psi = \{\vec{P}, \vec{U}, \vec{V}\}$.

\paragraph{SplineNet training details:}
The SplineNet architecture made of a series of dynamic graph convolution layers, followed by an adaptive max pooling and conv1d layers. We use the Chamfer distance as the loss function. The Chamfer distance ($\mathcal{L}_{CD}$) is a global distance metric between two sets of points as shown below.

\begin{equation}
    \mathcal{L}_{CD} = \sum_{\vec{P_i}\in\vec{P}}{\,\min_{\vec{Q_j}\in\vec{Q}}{{||\vec{P_i}-\vec{Q_j}}||_2}} + \sum_{\vec{Q_j}\in\vec{Q}}{\,\min_{\vec{P_i}\in\vec{P}}{||{\vec{P_i}-\vec{Q_j}}||_2}}
    \label{eq:ChamferDistance}
\end{equation}

For training and testing our experiments, we use the SplineDataset provided by \citet{sharma2020parsenet}. The SplineDataset is a diverse collection of open and closed splines that have been extracted from one million CAD geometries included in the ABC dataset. We run our experiments on open splines that are split into 3.2K, 3K, and 3K surfaces for training, testing, and validation. 

\subsection{PDE solver implementation with DSA prior}

Deep convolutional neural networks are a natural choice for the network architecture for solving PDEs due to the structured grid representation of $\mathcal{S}^d$ and similarly structured representation of $U^d_\theta$. The spatial localization of convolutional neural networks helps in learning the interaction between the discrete points locally. Since the network takes an input of a discrete grid representation (similar to an image, possibly with multiple channels) and predicts an output of the solution field of a discrete grid representation (similar to an image, possibly with multiple channels), this is considered to be similar to an image segmentation or image-to-image translation task in computer vision. U-Nets~\citep{ronneberger2015u} have been known to be effective for applications such as semantic segmentation and image reconstruction. Due to its success in diverse applications, we choose U-Net architecture for solving the PDE. The architecture of the network is shown in Figure~\ref{fig:unetarch}. First, a block of convolution and instance normalization is applied. Then, the output is saved for later use during skip-connection. This intermediate output is then downsampled to a lower resolution for a subsequent convolution block and instance normalization layers. This process is continued twice. The upsampling starts where the saved outputs of similar dimensions are concatenated with the output of upsampling for creating the skip-connections followed by a  convolution layer. LeakyReLU activation was used for all the intermediate layers. The final layer has a Sigmoid activation. 

 \begin{figure}[t!]
     \centering
     \includegraphics[width=0.8\linewidth]{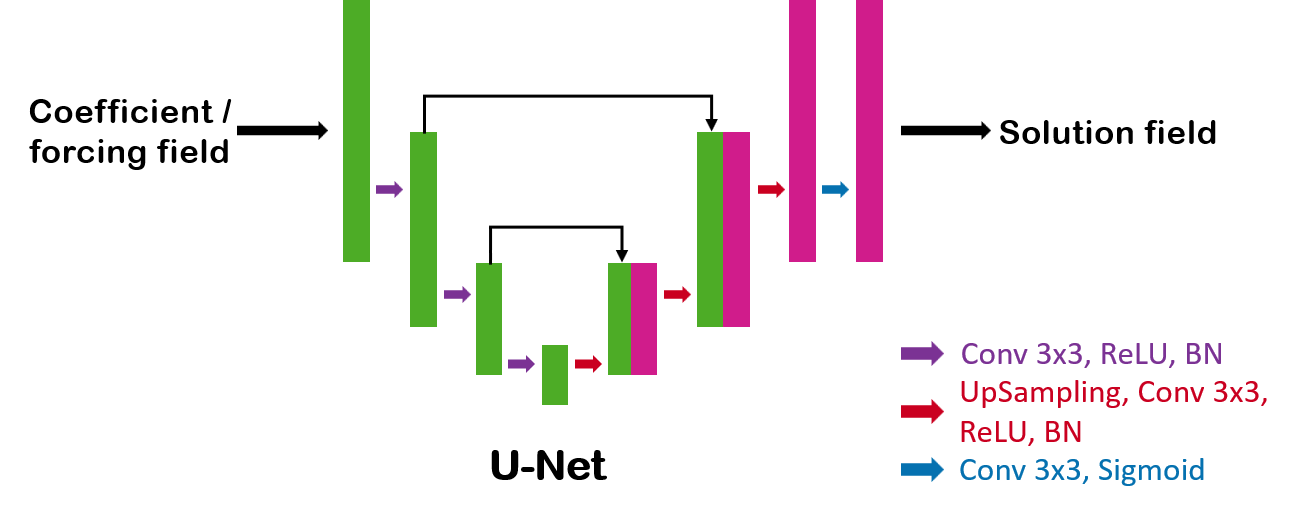}
     \caption{UNet architecture used for training}
     \label{fig:unetarch}
 \end{figure}

\subsubsection{Applying boundary conditions}\label{sec:boundary-conditions}
The Dirichlet boundary conditions are applied exactly. The query result from $ U^d_\theta $ from the network pertains only to the interior of the domain. The boundary conditions need to be taken into account separately. There are two ways of doing this:
\begin{itemize}[left=0in]
	\item Applying the boundary conditions exactly (this is possible only for Dirichlet conditions in FEM/FDM, and the zero-Neumann case in FEM)
	\item Taking the boundary conditions into account in the loss function, thereby applying them approximately.
\end{itemize}

\begin{figure*}[t!]
	\centering
	\includegraphics[trim=150 5 100 10,clip,width=0.9\linewidth]{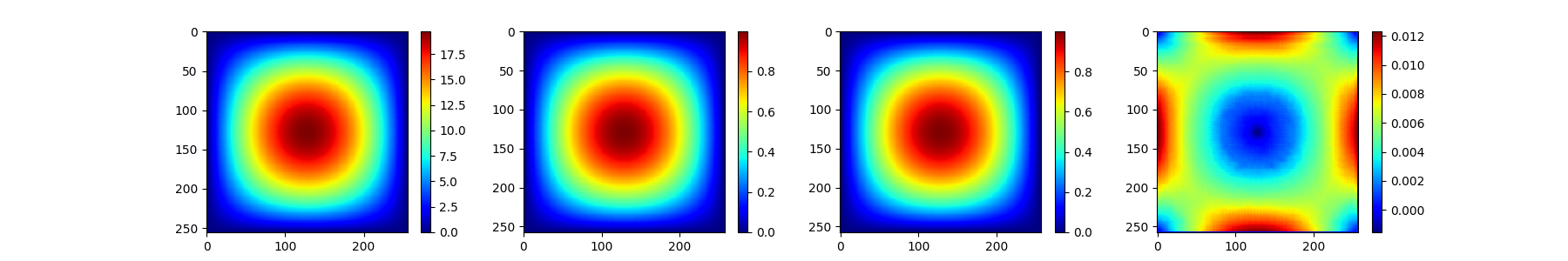}
	\caption{Solution to the linear Poisson's equation with forcing. From left to right: $f$, $u_{DSA}$, $u_{num}$ and ($u_{DSA}-u_{num}$). Here $u_{num}$ is a conventional numerical solution obtained through FEM. Diffusivity $\nu = 1$}
	\label{fig:poisson-manufactured-contours}
\end{figure*}

We take the first approach of applying the Dirichlet conditions exactly (subject to the mesh). Since the network architecture is well suited for 2d and 3d matrices (which serve as an adequate representation of the discrete field in 2D/3D on regular geometry), the imposition of Dirichlet boundary conditions amounts to simply padding the matrix by the appropriate values. A zero-Neumann condition can be imposed by taking the ``edge values" of the interior and copying them as padding. A nonzero Neumann condition is slightly more involved in the FDM case since additional equations need to be constructed, but if using FEM loss, this can be done with another surface integration on the relevant boundary.





\newpage
\section{Additional results}

\begin{figure}[h!]
    \def\lm{0.16\linewidth}
    \centering
    \setlength{\tabcolsep}{4pt}
    \begin{tabular}{c c c c}
        Original & Ground & $M_{\text{baseline}}$ & $M_{\text{DSA}}$ \\
        \frame{\includegraphics[width=\lm]{images/horse/unet_origin_0.png}} & 
        \frame{\includegraphics[width=\lm]{images/horse/unet_gnd_0.png}} & 
        \frame{\includegraphics[width=\lm]{images/horse/unet_base_0.png}} &
        \frame{\includegraphics[width=\lm]{images/horse/unet_conn_0.png}} \\
        \frame{\includegraphics[width=\lm]{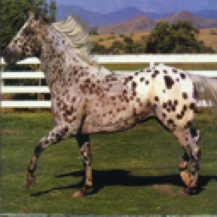}} & 
        \frame{\includegraphics[width=\lm]{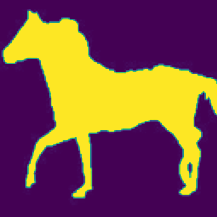}} & 
        \frame{\includegraphics[width=\lm]{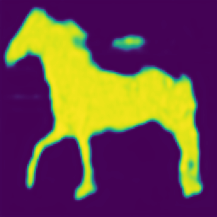}} &
        \frame{\includegraphics[width=\lm]{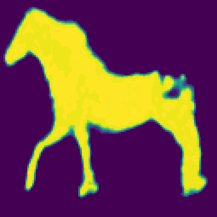}} \\
        \frame{\includegraphics[width=\lm]{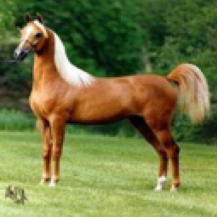}} & 
        \frame{\includegraphics[width=\lm]{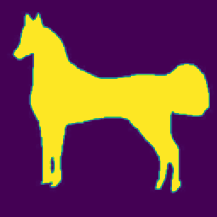}} & 
        \frame{\includegraphics[width=\lm]{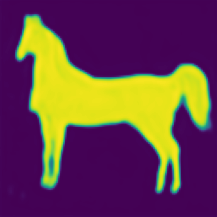}} &
        \frame{\includegraphics[width=\lm]{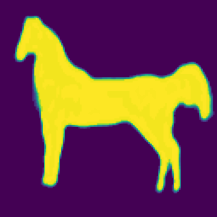}} \\
        \frame{\includegraphics[width=\lm]{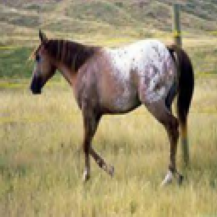}} & 
        \frame{\includegraphics[width=\lm]{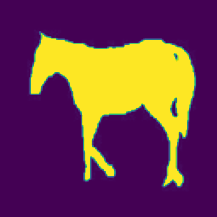}} & 
        \frame{\includegraphics[width=\lm]{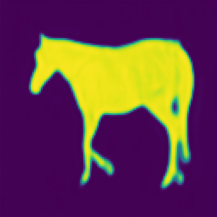}} &
        \frame{\includegraphics[width=\lm]{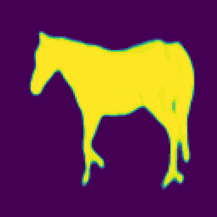}} \\
        \frame{\includegraphics[width=\lm]{images/horse/unet_origin_8.png}} & 
        \frame{\includegraphics[width=\lm]{images/horse/unet_gnd_8.png}} & 
        \frame{\includegraphics[width=\lm]{images/horse/unet_base_8.png}} &
        \frame{\includegraphics[width=\lm]{images/horse/unet_conn_8.png}} \\
        \frame{\includegraphics[width=\lm]{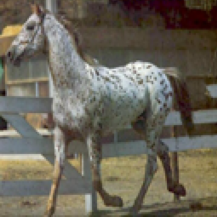}} & 
        \frame{\includegraphics[width=\lm]{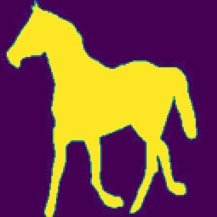}} & 
        \frame{\includegraphics[width=\lm]{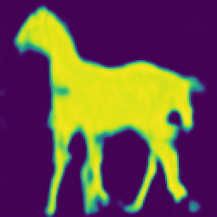}} &
        \frame{\includegraphics[width=\lm]{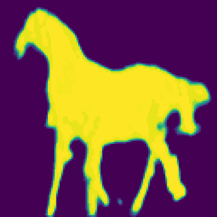}} \\
        \frame{\includegraphics[width=\lm]{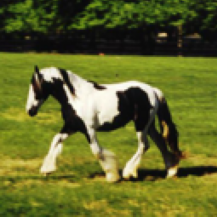}} & 
        \frame{\includegraphics[width=\lm]{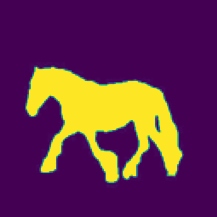}} & 
        \frame{\includegraphics[width=\lm]{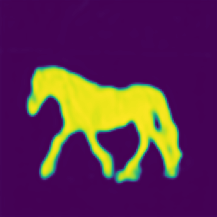}} &
        \frame{\includegraphics[width=\lm]{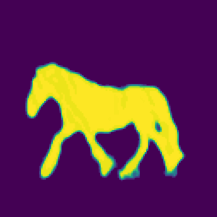}} \\
    \end{tabular}
    \caption{\textbf{Image Segmentation Tasks} Adding DSA layers ($M_{\text{DSA}}$) on top of U-Net ($M_{\text{baseline}}$) improves the segmentation tasks on both datasets.}
\end{figure}

\begin{figure}[h!]
    \centering
    \setlength{\tabcolsep}{1pt}
    \renewcommand{\arraystretch}{0.2}
    \def\sw{0.16\linewidth}
    \begin{tabular}{c c c c}
       Original & Ground & $M_\text{baseline}$ & $M_\text{DSA}$ \\
       \includegraphics[width=\sw]{images/cell_segmentation/img_11.png} & 
       \includegraphics[width=\sw]{images/cell_segmentation/cell_ground_11.png} &
       \includegraphics[width=\sw]{images/cell_segmentation/cell_noconn_pred_11.png} &
       \includegraphics[width=\sw]{images/cell_segmentation/cell_conn_pred_11.png} \\
            \includegraphics[width=\sw]{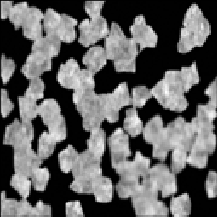} & 
       \includegraphics[width=\sw]{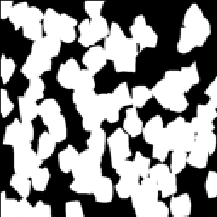} &
       \includegraphics[width=\sw]{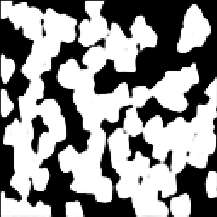} &
       \includegraphics[width=\sw]{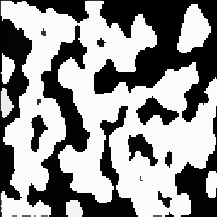} \\
            \includegraphics[width=\sw]{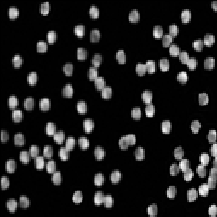} & 
       \includegraphics[width=\sw]{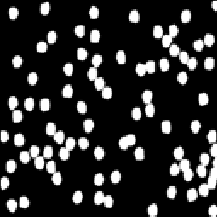} &
       \includegraphics[width=\sw]{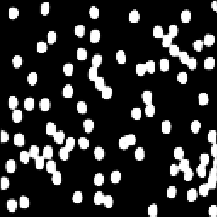} &
       \includegraphics[width=\sw]{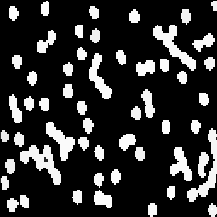} \\
            \includegraphics[width=\sw]{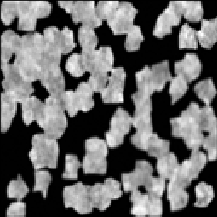} & 
       \includegraphics[width=\sw]{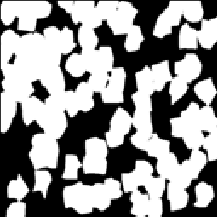} &
       \includegraphics[width=\sw]{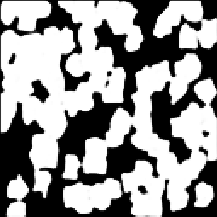} &
       \includegraphics[width=\sw]{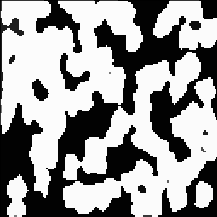} \\
            \includegraphics[width=\sw]{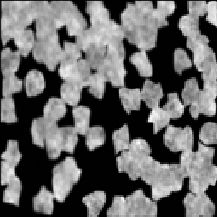} & 
       \includegraphics[width=\sw]{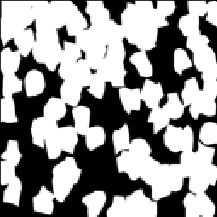} &
       \includegraphics[width=\sw]{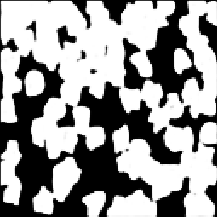} &
       \includegraphics[width=\sw]{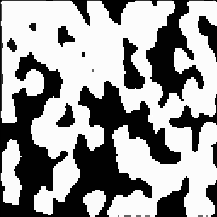} \\
            \includegraphics[width=\sw]{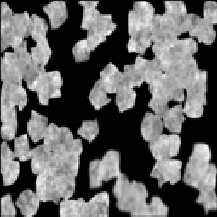} & 
       \includegraphics[width=\sw]{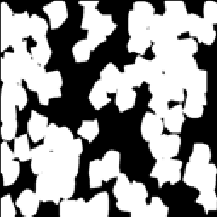} &
       \includegraphics[width=\sw]{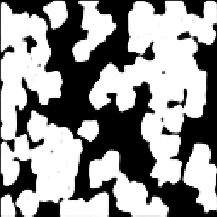} &
       \includegraphics[width=\sw]{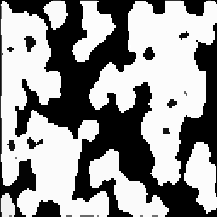} \\
            \includegraphics[width=\sw]{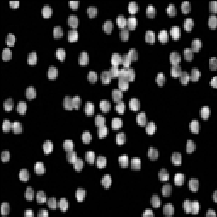} & 
       \includegraphics[width=\sw]{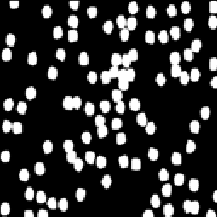} &
       \includegraphics[width=\sw]{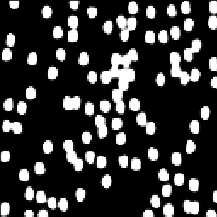} &
       \includegraphics[width=\sw]{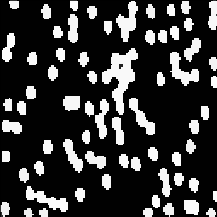} \\
            \includegraphics[width=\sw]{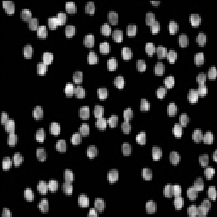} & 
       \includegraphics[width=\sw]{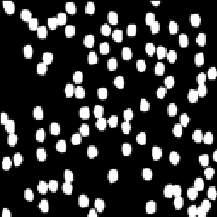} &
       \includegraphics[width=\sw]{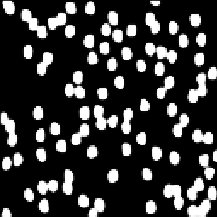} &
       \includegraphics[width=\sw]{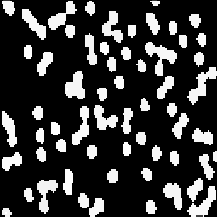} \\
    \end{tabular}

    \caption{Additional cell segmentations results. $M_{\text{baseline}}$ and $M_{\text{DSA}}$ correspond to U-Net and U-Net+DSA layers, respectively.}
    \label{fig:appdx_cell}
\end{figure}

\end{document}